\theoremstyle{plain}
\newtheorem{theorem}{Theorem}[section]
\newtheorem{proposition}[theorem]{Proposition}
\newtheorem{lemma}[theorem]{Lemma}
\theoremstyle{definition}
\newtheorem{definition}[theorem]{Definition}
\theoremstyle{remark}
\newenvironment{hproof}{%
  \proof}{\endproof}
\icmltitlerunning{LoRA Dropout as a Sparsity Regularizer for Overfitting Control}
\begin{document}

\twocolumn[
\icmltitle{LoRA Dropout as a Sparsity Regularizer for Overfitting Control}



\icmlsetsymbol{equal}{*}

\begin{icmlauthorlist}
\icmlauthor{Yang Lin}{pku,lab,equal}
\icmlauthor{Xinyu Ma}{pku,lab,equal}
\icmlauthor{Xu Chu}{pku,lab}
\icmlauthor{Yujie Jin}{pku,lab}
\icmlauthor{Zhibang Yang}{lab}
\icmlauthor{Yasha Wang}{pku,lab}
\icmlauthor{Hong Mei}{pku}
\end{icmlauthorlist}

\icmlaffiliation{pku}{Peking University, Beijing, China}
\icmlaffiliation{lab}{Key Laboratory of High Confidence Software Technologies, Ministry of Education, Beijing, China}
\icmlcorrespondingauthor{Xu Chu}{chu\_xu@pku.edu.cn}
\icmlcorrespondingauthor{Yasha Wang}{wangyasha@pku.edu.cn}

\icmlkeywords{Machine Learning, ICML}

\vskip 0.3in
]



\printAffiliationsAndNotice{\icmlEqualContribution} 

\begin{abstract}
Parameter-efficient fine-tuning methods, represented by LoRA, play an essential role in adapting large-scale pre-trained models to downstream tasks. 
However, fine-tuning LoRA-series models also faces the risk of overfitting on the training dataset, and yet there's still a lack of theoretical guidance and practical mechanism to control overfitting on LoRA-based PEFT methods. 
In this paper, we propose a LoRA Dropout mechanism for the LoRA-based methods by introducing random noises to the learnable low-rank matrices and increasing parameter sparsity. 
We then demonstrate the theoretical mechanism of our LoRA Dropout mechanism from the perspective of sparsity regularization by providing a generalization error bound under this framework. Theoretical results show that appropriate sparsity would help tighten the gap between empirical and generalization risks and thereby control overfitting. 
Furthermore, based on the LoRA Dropout framework, we introduce a test-time ensemble strategy and provide theoretical evidence demonstrating that the ensemble method can further compress the error bound, and lead to better performance during inference time. 
Extensive experiments on various NLP tasks provide practical validations of the effectiveness of our LoRA Dropout framework in improving model accuracy and calibration.

\end{abstract}

\section{Introduction}
In recent years, Pre-trained Language Models (PLMs)~\cite{devlin2018bert,liu2019roberta,he2020deberta,touvron2023llama}  have demonstrated increasingly superior performances in various natural language processing tasks as the rapid growth of model parameter scale. However, with the increasing model capacity and complexity, the challenge arises when adapting the PLMs to specific downstream tasks, as fully fine-tuning often requires substantial computational resources. Therefore, a new fine-tuning paradigm emerges named Parameter-Efficient Fine-Tuning (PEFT), aiming to optimize and adapt PLMs to specific downstream tasks with minimal adjustments to their parameters.

There has been a long line of research in the field of PEFT~\cite{houlsby2019parameter,lester2021power,li2021prefix,hu2021lora,zhang2023adaptive,ma2024parameter}. Among these works, the Low-Rank Adaptation (LoRA) method \cite{hu2021lora} and its variants \cite{dettmers2023qlora, zhang2023adaptive,zi2023delta} have been the most effective and widely adopted. The basic idea behind LoRA is that only some zero-initialized delta weight matrices get optimized during fine-tuning, and the original pre-trained parameters remain unmodified. To improve parameter efficiency, LoRA further decomposes the delta weight matrix into the product of two low-rank matrices.

\begin{figure}[t]
\centering
  \includegraphics[width=3.3 in]{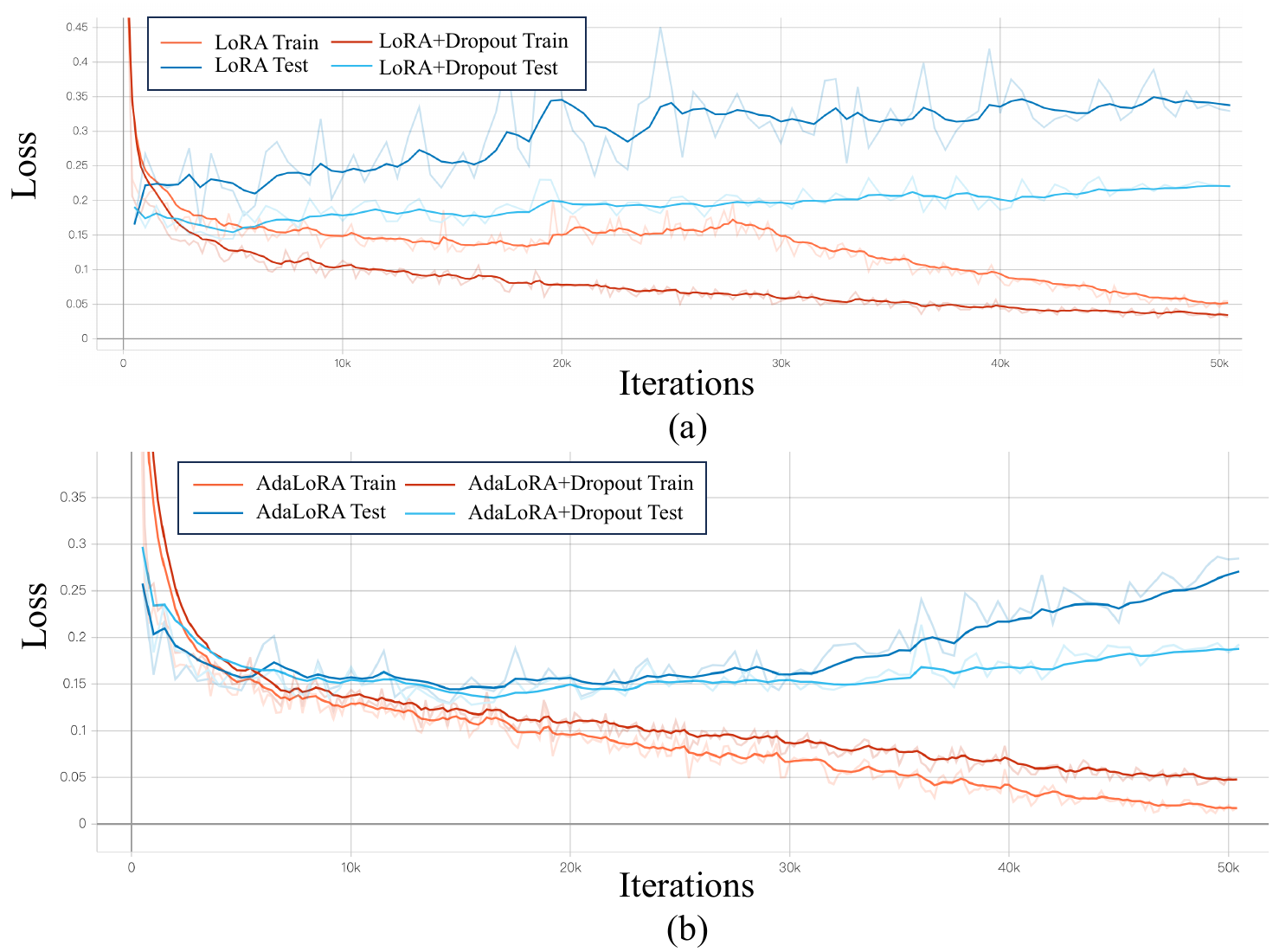}
  \caption{Loss curves on train and test set of SST2 dataset during fine-tuning of (a) LoRA w/wo our dropout framework, (b) AdaLoRA w/wo our dropout framework.}
  \label{fig:loss_curve}
  \vspace{-0.12in}
\end{figure}

However, for LoRA-series methods, picking a proper rank in the low-rank decomposition remains a significant challenge. In order to better adapt to the semantic shift between pre-trained and downstream tasks, these models typically tend to maintain a relatively high rank to ensure sufficient expressive power. However, an excessively high rank will increase the model's degrees of freedom, elevating the risk of overfitting downstream tasks. 
To overcome this challenge and select an appropriate parameter budget, one of LoRA's variants, AdaLoRA~\cite{zhang2023adaptive} proposes to automatically select fine-tuning parameters by learning the pseudo-singular values of the matrix decomposition. However, this parameter selection method heavily relies on gradients of parameters on the training set, which actually further increases the risk of overfitting the model to the training data. As shown in Figure.\ref{fig:loss_curve}, the gaps between train and test losses on both LoRA and AdaLoRA become larger during the fine-tuning process.
In fact, to the best of our knowledge, there is still a lack of theoretical guidance and practical mechanism to control overfitting on LoRA-based PEFT methods.

In this paper, motivated by the dropout regularization technique~\cite{hinton2012improving,srivastava2014dropout} that is commonly adopted in deep learning, we introduce a LoRA Dropout regularization framework on LoRA learnable parameters to control the overfitting risk when fine-tuning LoRA-based models. By introducing random noises and increasing sparsity on tunable parameters, our framework can improve the generalization ability of LoRA-based models on downstream tasks.
Furthermore, to gain a deeper insight into the above dropout framework, in this work, we answer the following profound question:

\textbf{ \textit{What is the theoretical mechanism behind the alleviation of overfitting on the training data through random dropout of LoRA parameters?}} 

We build our theoretical framework by modeling the training process under the LoRA Dropout from the perspective of sparse fine-tuning, and show that fine-tuning with LoRA Dropout can be viewed as an optimization problem with sparsity regularization. We further provide a generalization error bound under the sparsity regularization framework. Through this generalization error bound, we reveal that introducing appropriate sparsity on LoRA tunable parameters during fine-tuning helps to balance the empirical risk minimization and complexity of the adaptation function, thereby tightening the gap between empirical and generalization risks and controlling overfitting on the training data.


Besides the ability of LoRA Dropout to control overfitting during the fine-tuning period, we propose a test-time ensemble method to further improve the model's performance during the inference stage. By activating dropout during testing, we obtain an ensemble classifier consisting of models with different parameter dropouts.
Theoretical evidence demonstrates that the ensemble classifier can further compress the generalization error bound, and lead to a better test-time accuracy. 

In summary, we conclude the main contributions of this paper as follows. We propose a theoretically grounded dropout framework designed for LoRA-based models to promote their generalization ability on downstream tasks. Specifically, we provide theoretical analyses on the generalization error bound for LoRA Dropout, which balances the empirical risk minimization and complexity of the adaptation function class through sparsity regularization. Based on the LoRA Dropout framework, we propose an ensemble strategy during the inference stage. 
We show that this strategy will lead to an ensemble model with a tighter error bound and further enhance the model's test-time generalizability.
Extensive experiments conducted on a wide range of NLP tasks demonstrate the effectiveness of our method in improving the model's accuracy and calibration.

\section{Related Works}
\subsection{Parameter-Efficient Fine-Tuning (PEFT)}
With the increasing model size and complexity, Pre-trained Language Models (PLMs) demonstrate powerful performance across various NLP tasks, but it also becomes harder to efficiently adapt PLMs to downstream tasks. PEFT aims to solve this challenge by fine-tuning a few additional parameters or a subset of pre-trained parameters. Current mainstream PEFT approach can be roughly divided into three categories~\cite{lialin2023scaling,xu2023parameter}.
\textit{Additive Fine-tuning} methods~\cite{houlsby2019parameter,pfeiffer2020adapterfusion,he2021towards,lester2021power,li2021prefix} focus on adding extra tunable parameters by introducing additional layers or learnable prompts. \textit{Partial Fine-tuning} methods~\cite{zaken2022bitfit,xu2021raise,fu2023effectiveness} select a subset of pre-trained parameters for fine-tuning. \textit{Reparameterization Fine-tuning} methods~\cite{hu2021lora,zhang2023adaptive,edalati2022krona} adopt low-rank representations to minimize the number of trainable parameters. 
In this paper, we focus on the most effective and widely adopted method, LoRA~\cite{hu2021lora} and its variants, which decompose the learnable delta weight into the product of two low-rank matrices. The rank of the decomposition is essential for LoRA. A small rank may lead to insufficient expressive power, while a large rank could result in overfitting. One of LoRA’s variants, AdaLoRA~\cite{zhang2023adaptive} proposes to decompose the delta weight through a quasi-SVD method, and select parameters through importance scoring. Nevertheless, this selection method also relies on
gradients on the training set, leading to an additional risk of overfitting. In this work, we propose a  theoretically grounded dropout framework for LoRA-series methods, filling the gap that the LoRA-based PEFT methods lack theoretical guidances and practical mechanisms to control overfitting.

\subsection{Dropout Regularization}
The dropout mechanism~\cite{hinton2012improving} is a well-known and widely-adopted technique in deep neural networks to prevent overfitting. In standard dropout, each neuron in the network is omitted from the network with a certain possibility during training. Subsequently, various dropout techniques were introduced. Dropconnect~\cite{wan2013regularization} sets model weights rather than neuron outputs to zero with some probability. Gaussian dropout~\cite{srivastava2014dropout} replaces the Bernoulli noises in standard dropout with Gaussian noises. Dropout methods for specific model structures are also proposed, like Spatial dropout~\cite{tompson2015efficient} for convolutional layers and Recurrent dropout~\cite{semeniuta2016recurrent} for recurrent neural networks. Meanwhile, works have been done to explore the theoretical factors behind dropout's ability to suppress overfitting. Some works believe that the the model learns a geometric mean over the ensemble of possible sub-networks through dropout~\cite{warde2013empirical,baldi2013understanding}, and some works view dropout from a Bayesian perspective and argue that model with dropout can be interpreted as a Bayesian model approximating a posterior over parameters~\cite{gal2016dropout}. However, despite extensive research of the dropout technique, currently there's little practical or theoretical work on applying dropout on LoRA-based PEFT models, where fine-tuning happens on the delta weight matrices with low-rank decompositions.

\section{Proposed LoRA Dropout Framework}
In this section, we present our LoRA Dropout framework. We start by briefly reviewing the LoRA method. Then we show the details of our LoRA Dropout on the trainable low-rank parameter matrices. Finally, based on LoRA Dropout, we present our training objective for the fine-tuning phase and the ensemble strategy for the inference phase.

\subsection{Background: Low Rank Adaptation (LoRA)}
Before introducing our method, we give a brief review of the LoRA method~\cite{hu2021lora} in PEFT.
When fine-tuning on downstream task, to maintain the knowledge from the pre-training period, LoRA keeps the pre-trained parameters $\bm{W}_0 \in \mathbb{R}^{n_1 \times n_2}$ unmodified, and updates a zero-initialized delta weight matrix $\Delta \bm{W}$. The forward pass is:
\begin{equation}
    \bm{h} = \bm{W_0}\bm{x}+\Delta \bm{W}\bm{x}=\bm{W}_0\bm{x}+\bm{B}\bm{A}\bm{x}.
\label{eq:lora}
\end{equation}
To control the number of tunable parameters, as shown in Eq.\ref{eq:lora}, the delta weight matrix $\Delta W$ can be further decomposed into the product of two low-rank matrices, $\bm{A} \in \mathbb{R}^{r \times n_2}$ and $\bm{B} \in \mathbb{R}^{n_1\times r}$, where $r \ll \{n_1,n_2\}$.

\subsection{LoRA Dropout}
As dropout mechanisms have demonstrated great performance on control overfitting, in this work, for LoRA-based PEFT methods, we introduce a LoRA Dropout framework to improve the generalization ability when adapting to downstream tasks. Specifically, for a LoRA module described in Eq.\ref{eq:lora}, we randomly drop rows and columns from both tunable low-rank parameter matrices:
\begin{equation}
\begin{aligned}
    &\hat{\bm{A}} = \bm{A} \cdot \mathrm{diag}(\bm{m}_A), \bm{m}_A\sim \mathrm{Bern}(1-p);\\
    &\hat{\bm{B}} = \left(\bm{B}^\top \cdot \mathrm{diag}(\bm{m}_B)\right)^\top, \bm{m}_B\sim \mathrm{Bern}(1-p),
\end{aligned}
\label{eq:lora_dropout}
\end{equation}
where $\bm{m}_A \in \mathbb{R}^{n_2}$ and $\bm{m}_B \in \mathbb{R}^{n_1}$ are mask vectors drawn from the Bernoulli distribution, and $p$ denotes the probability that the parameters get dropped. Note that we conduct dropout on the input/output dimension of both matrices as applying dropout on the rank dimension would decrease the rank of LoRA, significantly impacting its expressive power. Additionally, performing dropout on the rank dimension will not increase the sparsity of the product of LoRA matrices, while theoretical evidence in the next section highlights the significance of sparsity in our framework.

With the dropout, the forward pass with dropout would be
\begin{equation}
\begin{aligned}
    \hat{\bm{h}} =\bm{W}_0\bm{x}+\hat{\bm{B}}\hat{\bm{A}}\bm{x}.
\end{aligned}
\label{eq:lora_dropout_forward}
\end{equation}
It should be noted that our dropout method is not only applicable to the original LoRA, but also equally suitable for LoRA-based variant methods, as long as they take the form of low-rank matrix decomposition. For example, AdaLoRA~\cite{zhang2023adaptive} conducts the decomposition through a quasi-SVD method,
\begin{equation}
\begin{aligned}
    \Delta \bm{W}=\bm{P}\bm{\Lambda}\bm{Q},
\end{aligned}
\label{eq:adalora}
\end{equation}
where $\bm{P} \in \mathbb{R}^{n_1 \times r}$ and $\bm{Q} \in \mathbb{R}^{r\times n_2}$ are left/right singular vectors, and $\bm{\Lambda} \in \mathbb{R}^{r \times r}$ is a diagonal matrix containing singular values. It's easy to adapt our LoRA Dropout to AdaLoRA through:
\begin{equation}
\begin{aligned}
    &\hat{\bm{P}} = \left(\bm{P}^\top \cdot \mathrm{diag}(\bm{d}_P)\right)^\top, \bm{d}_P\sim \mathrm{Bern}(1-p), \\
    &\hat{\bm{Q}} = \bm{Q} \cdot \mathrm{diag}(\bm{m}_Q), \bm{m}_Q\sim \mathrm{Bern}(1-p).
\end{aligned}
\label{eq:adalora_dropout}
\end{equation}
Dropout is not applied on the $\mathbf{\Lambda}$ matrix as it will also lead to rank shrinking and further influence the expressive power. Moreover, $\mathbf{\Lambda}$ will be adjusted by the AdaLoRA algorithm by filtering out minor compositions in practice, hence we conduct no further dropout on it.
After dropout, the delta weight matrix would be 
\begin{equation}
\begin{aligned}
    &\Delta \hat{\bm{W}}=\hat{\bm{P}}\bm{\Lambda}\hat{\bm{Q}}.
\end{aligned}
\label{eq:adalora_delta_weight}
\end{equation}
We provide a schematic diagram in Figure \ref{fig:lora_dropout} illustrating the integration of the proposed LoRA Dropout framework with both LoRA and AdaLoRA methods.

\begin{figure}[t]
\centering
  \includegraphics[width=2.8 in]{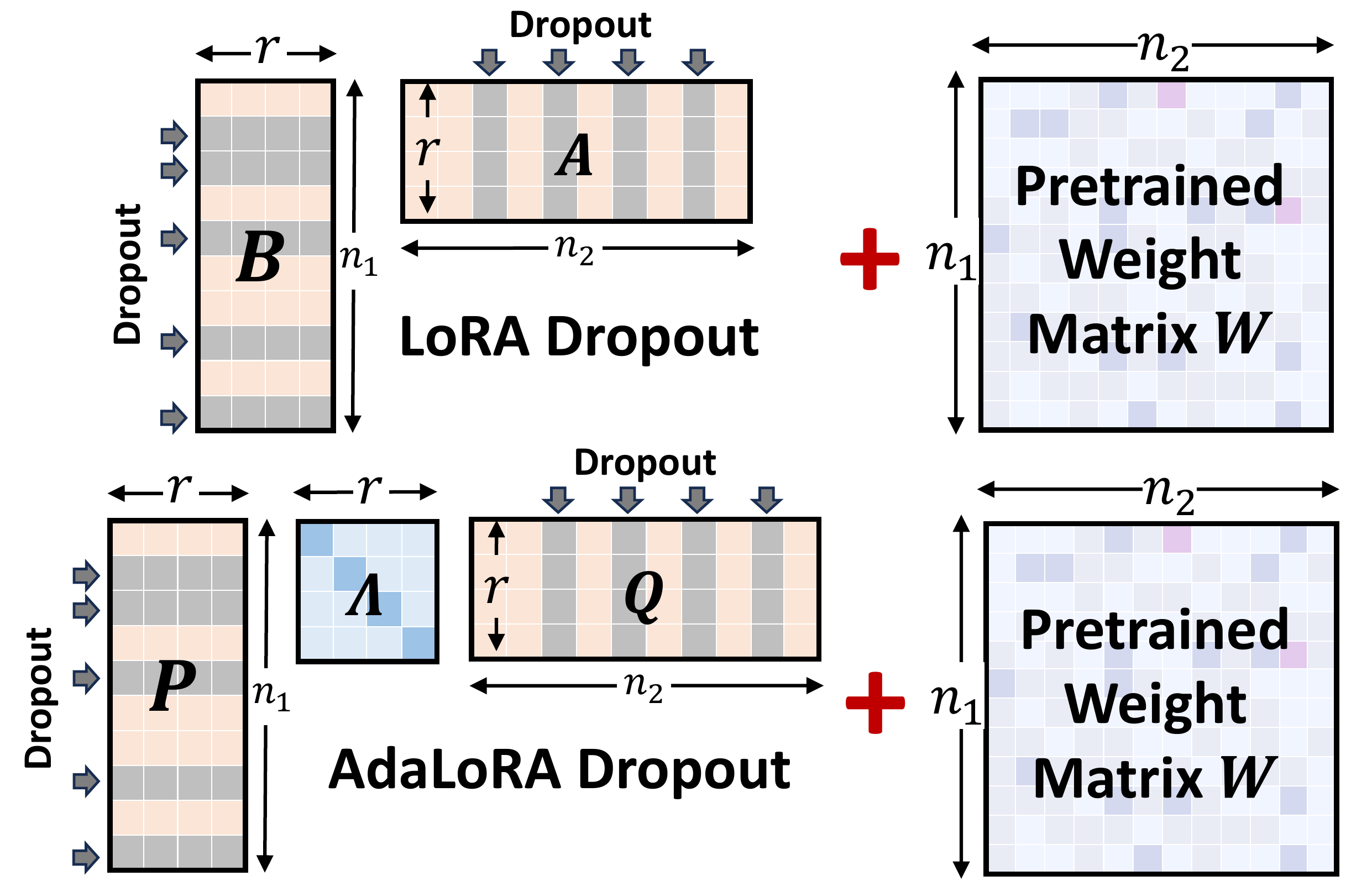}
  \caption{Our proposed dropout framework combined with both LoRA and AdaLoRA methods.}
  \label{fig:lora_dropout}
  \vspace{-0.12in}
\end{figure}

\subsection{Training Objective}
Let us denote $\bm{m}$ as the concatenation of all dropout vectors from LoRA module of a fine-tuning model, $\Delta\bm{\theta}(\bm{m})$ as the LoRA parameters after the dropout $\bm{m}$, and $\bm{\theta}^0$ as the original parameters of the pre-trained model. To obtain an effective model under various dropouts, we define the training objective as an average of multiple losses under $N$ different dropout instances on parameters,
\begin{equation}
\begin{aligned}
    \mathcal{L}(\bm{x})=\frac{1}{N}\sum_{r=1}^{N}\ell\left(\bm{x};\bm{\theta}^0+\Delta\bm{\theta}({\bm{m}_r})\right), \bm{m}_r \sim \mathrm{Bern}(1-p).
\end{aligned}
\label{eq:training_obj}
\end{equation}

\subsection{Test-time Ensemble}
To further enhance the model's performance during inference time, inspired by the MC dropout mechanism~\cite{gal2016dropout}, we propose a test-time ensemble method. Unlike the conventional dropout that is deactivated when testing, our ensemble strategy aggregates the outputs of models under different dropouts during inference time to get the final output, which can be viewed as sampling and aggregating models from a parameter distribution with a Monte Carlo method.
Specifically, let $\mathcal{M}\left(\bm{\theta}^0+\Delta\bm{\theta}({\bm{m}_r})\right)$ denote the model with LoRA parameter under dropout $\bm{m}_r$, then the output $\bm{o}$ of the ensemble model is 
\begin{equation}
\begin{aligned}
    & \bm{o}(\bm{x}) = \frac{1}{N}\sum_{r=1}^{N}\bm{o}_{r}(\bm{x}) = \frac{1}{N}\sum_{r=1}^{N} \mathcal{M}\left(\bm{x};\bm{\theta}^0+\Delta\bm{\theta}({\bm{m}_r})\right),
\end{aligned}
\label{eq:ensemble_method}
\end{equation}
where $N$ is the number of dropout instances. We provide a theoretical analysis in subsection \ref{section:ensemble_theory} for how this ensemble strategy can enhance performance during test-time.

To summarize, we provide an overall training and testing procedure for fine-tuning pre-trained models with the LoRA-based method and LoRA Dropout in Alg \ref{alg:training_procedure}.

\begin{algorithm}[t]
    \caption{The overall fine-tuning and testing procedure of a pre-trained model with LoRA Dropout.}
    \label{alg:training_procedure}
\begin{algorithmic}[1]
    \STATE {\bfseries Input:} total epoch number $T$, batch size $B$, dropout rate $p$, dropout instance number $N$.
    \STATE {\bfseries Training Phase:}
    \FOR{$epoch$ from 1 to $T$}
        \FOR{each iteration}
            \STATE randomly draw $B$ samples from the training set;
            \STATE $\mathcal{L}_{tr} \leftarrow 0$;
            \STATE draw $\bm{m}_r \sim \mathrm{Bern}(1-p)$, for $r$ in 1,...,$N$;
            \FOR{each sample $\bm{x}$ in batch}
                \STATE $\mathcal{L}_{tr} \leftarrow \mathcal{L}_{tr}+\mathcal{L}(\bm{x})$ by Eq.(\ref{eq:training_obj});
            \ENDFOR
            \STATE update tunable parameters with $\nabla\mathcal{L}_{tr}$.
        \ENDFOR
    \ENDFOR
    \STATE {\bfseries Test Phase:}
    \FOR{each sample $\bm{x}$ in test set}
        \STATE draw $\bm{m}_r \sim \mathrm{Bern}(1-p)$, for $r$ in 1,...,$N$;
        \STATE compute the ensemble output $\bm{o}(\bm{x})$ following Eq.(\ref{eq:ensemble_method}).
    \ENDFOR
\end{algorithmic}
\end{algorithm}

\section{Theoretical Results}
In this section, we present the theoretical results of our proposed methods from fine-tuning to inference perspectives. 
For the fine-tuning phase, we first model the fine-tuning with LoRA Dropout as an optimization problem under model sparsity regularization. Then we propose the generalization error bound under the sparsity regularization framework and reveal the theoretical mechanism behind the trade-off between underfitting and overfitting of LoRA Dropout fine-tuning. 
For the inference phase, we prove a tighter error bound with the test-time LoRA Dropout ensemble, demonstrating a better test-time generalizability of our framework.

\subsection{LoRA Dropout Fine-Tuning Through the Lens of Sparse Regularization}
Supposing a pre-trained model $\mathcal{M}^0$ parameterized by $\bm{\theta}^0 \in \mathbb{R}^d$, LoRA-like methods tune the model $\mathcal{M}^0$ with low-rank parameterization of the delta parameters $\Delta \bm{\theta}$. With our LoRA Dropout strategy, which samples random neurons on the input and output sides of LoRA matrices with a probability $p$ to mask them to zeros, the updated delta $\Delta \bm{\theta}$ enjoys a natural sparsity property that each entry in the product of the LoRA matrices will be zero with probability $1-(1-p)^2=2p-p^2$. 
Inspired by \cite{fu2023effectiveness}, we model this fine-tuning procedure as an optimization problem with model sparsity regularization. Let us denote $\bm{\theta} = \bm{\theta}^0 + \Delta \bm{\theta}$ as the fine-tuned model parameters, where $\Delta \bm{\theta}$ is realized by LoRA reparameterization with our LoRA Dropout method. Assume $\bm{d} \in \{ 0,1 \}^d$ as a dropout instance applied to the production of LoRA matrices (i.e., $\Delta \bm{\theta}$) sampled from a Bernoulli distribution, i.e., $\bm{d} \sim \mathrm{Bern}(2p-p^2)$, where $1$ denotes the corresponding entry is dropped to zero. The fine-tuning can be formulated as:
\begin{equation}
\begin{aligned}
    \min_{\Delta \bm{\theta}} \mathcal{L}(\bm{\theta}^0 + &  \Delta \bm{\theta}), \\
    \mathrm{s.t.} \ \mathbb{E}_{\bm{d} \sim \mathrm{Bern}(2p-p^2)} & ||\bm{d} \odot \Delta \bm{\theta}||_2^2 = 0.
\end{aligned}
\label{eq:sps_1}
\vspace{-0.01in}
\end{equation}
The condition denotes the sparsity of $\Delta \bm{\theta}$. By Lagrange duality, problem (\ref{eq:sps_1}) is equivalent to the following problem:
\begin{equation}
    \hat{\mathcal{L}} = \min_{\Delta \bm{\theta}} \max_{\lambda} \mathcal{L}(\bm{\theta}^0 +  \Delta \bm{\theta}) + \lambda \mathbb{E}_{\bm{d} \sim \mathrm{Bern}(2p-p^2)}  ||\bm{d} \odot \Delta \bm{\theta}||_2^2.
\label{eq:sps_reg}
\end{equation}
Hence, we formulate the regularized optimization problem:
\begin{equation}
    \mathcal{L}_{\lambda} = \min_{\Delta \bm{\theta}} \mathcal{L}(\bm{\theta}^0 +  \Delta \bm{\theta}) + \lambda \mathbb{E}_{\bm{d} \sim \mathrm{Bern}(2p-p^2)}  ||\bm{d} \odot \Delta \bm{\theta}||_2^2 \leq \hat{\mathcal{L}}.
\label{eq:sps_final}
\end{equation}
where $\lambda$ is an arbitrary hyperparameter. This optimization objective is upper bounded by $\hat{\mathcal{L}}$, which is equivalent to the optima of problem (\ref{eq:sps_1}).

\subsection{Generalization Analysis}\label{sec:generalization_analysis}
In this subsection, we introduce the stability analysis of a sparse-regularized algorithm to analyze the generalization error bound of LoRA Dropout fine-tuning through optimizing Eq. (\ref{eq:sps_final}). 
Stability has been a widely studied topic in machine learning \cite{bousquet2002stability, charles2018stability, kuzborskij2018data} and demonstrated as an important property for analyzing the generalization error bound of a random algorithm \cite{bousquet2002stability, elisseeff2005stability}.
Here we adopt one of the commonly used analytic mechanisms, the Pointwise Hypothesis Stability (PHS), which analyzes the perturbation of the optimal model after removing one of the training samples.
Following \cite{charles2018stability}, we denote the entire training dataset as $\mathbf{S} = \{ x_i \}_{i=1}^n$ and the dataset after removing a sample $x_i$ as $\mathbf{S}^i = \mathbf{S} - \{x_i\}$. We assume that $i\sim \mathrm{U}(n)$ that the removal is sampled from a uniform distribution. We also denote $\bm{\theta}_\ell (\mathbf{S})$ as the optimal model parameters w.r.t. loss function $\ell$ and dataset $\mathbf{S}$.

\begin{definition}[Pointwise Hypothesis Stability \cite{bousquet2002stability}]
    We say that a learning algorithm $\mathcal{M}$ parameterized by $\bm{\theta}$ w.r.t. a loss function $\ell$ has pointwise hypothesis stability $\beta$, if:
    \begin{equation}
        \mathbb{E}_{\mathbf{S}, i \sim \mathrm{U}(n)}\left|\ell\left(x_i; \bm{\theta}_\ell(\mathbf{S}^i) \right)-\ell\left( x_i; \bm{\theta}_\ell(\mathbf{S})\right)\right| \leq \beta,
        \vspace{-0.05in}
    \end{equation}
\end{definition}
where $\ell(x_i; \bm{\theta})$ denotes the sample loss of $x_i$ when the model parameter is $\bm{\theta}$. Here we present a PHS upper bound of our proposed LoRA Dropout framework.

\begin{proposition} [PHS Upper Bound of LoRA Dropout]
If the loss function $\mathcal{L}_\lambda$ of LoRA Dropout algorithm $\mathcal{M}$ is $\eta$-Lipschitz, and $\bm{\theta}_{\mathcal{L}_\lambda} (\mathbf{S}^i)$ is close to $\bm{\theta}_{\mathcal{L}_\lambda} (\mathbf{S})$, the Hessian matrix $\nabla^2 \mathcal{L}(\bm{\theta}_{\mathcal{L}_\lambda} (\mathbf{S}))$ at $\bm{\theta}_{\mathcal{L}_\lambda} (\mathbf{S})$ is positive-semidefinite with a singular value decomposition $U \operatorname{diag}(\Lambda) U^{-1}, \Lambda=\left\{\Lambda_1, \cdots, \Lambda_m\right\}$ and $\Lambda_{\min }=$ $\min \left\{\Lambda_1, \cdots, \Lambda_m\right\}$, then the LoRA Dropout algorithm optimizing $\mathcal{L}_{\lambda}$ on $\mathbf{S}$ has an upper bound of pointwise hypothesis stability of:
\begin{equation}
\begin{aligned}
    \mathbb{E}_{\mathbf{S}, i \sim \mathrm{U}(n)}\left|\mathcal{L}_{\lambda}\left(x_i; \bm{\theta}_{\mathcal{L}_\lambda} (\mathbf{S}^i)\right)-\mathcal{L}_{\lambda}\left(x_i; \bm{\theta}_{\mathcal{L}_\lambda} (\mathbf{S})\right)\right| \\
    \leq \frac{2 \eta^2}{\left(\Lambda_{\min }+2\lambda (2p-p^2)\right) n} .
\end{aligned}
\end{equation}
\label{prop:phs}
\vspace{-0.27in}
\end{proposition}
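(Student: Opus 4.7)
The plan is to reduce the stability analysis to a standard strong-convexity plus sample-removal perturbation argument, using the dropout regularizer to supply the strong convexity. First, I would simplify the regularization term. Since each coordinate of $\bm{d}$ is an independent Bernoulli with parameter $2p-p^2$, a coordinate-wise expectation gives
\begin{equation*}
\mathbb{E}_{\bm{d}\sim\mathrm{Bern}(2p-p^2)}\|\bm{d}\odot\Delta\bm\theta\|_2^2 = (2p-p^2)\|\Delta\bm\theta\|_2^2,
\end{equation*}
so $\mathcal{L}_\lambda$ is just the original loss plus a Tikhonov-type quadratic penalty with coefficient $\lambda(2p-p^2)$. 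Its Hessian at $\bm\theta_{\mathcal{L}_\lambda}(\mathbf{S})$ is $\nabla^2\mathcal{L}(\bm\theta_{\mathcal{L}_\lambda}(\mathbf{S})) + 2\lambda(2p-p^2)I$, whose minimum eigenvalue is at least $\mu := \Lambda_{\min} + 2\lambda(2p-p^2)$. Together with the closeness assumption between $\bm\theta_{\mathcal{L}_\lambda}(\mathbf{S}^i)$ and $\bm\theta_{\mathcal{L}_\lambda}(\mathbf{S})$, this yields local $\mu$-strong convexity of $\mathcal{L}_\lambda(\mathbf{S};\cdot)$ and $\mathcal{L}_\lambda(\mathbf{S}^i;\cdot)$ over the line segment joining the two optima.

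Next I would bound the parameter perturbation via first-order optimality. Writing $f(\bm\theta)=\mathcal{L}_\lambda(\mathbf{S};\bm\theta)$ and $g(\bm\theta)=\mathcal{L}_\lambda(\mathbf{S}^i;\bm\theta)$, both gradients vanish at their respective minimizers. Since the two empirical losses differ by a single sample term of order $1/n$, we have $\nabla f(\bm\theta_{\mathcal{L}_\lambda}(\mathbf{S}^i)) = \tfrac{1}{n}\nabla\ell(x_i;\bm\theta_{\mathcal{L}_\lambda}(\mathbf{S}^i))$. A Taylor expansion of $\nabla f$ around $\bm\theta_{\mathcal{L}_\lambda}(\mathbf{S}^i)$ (justified by the closeness assumption, which suppresses higher-order terms) together with inverting the Hessian gives
\begin{equation*}
\|\bm\theta_{\mathcal{L}_\lambda}(\mathbf{S}^i)-\bm\theta_{\mathcal{L}_\lambda}(\mathbf{S})\|_2 \;\leq\; \frac{\|\nabla\ell(x_i;\,\cdot)\|_2}{\mu\, n} \;\leq\; \frac{\eta}{\mu\, n},
\end{equation*}
where the last step uses the $\eta$-Lipschitz property inherited by the per-sample gradient. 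Finally, applying Lipschitzness a second time to the outer loss difference,
\begin{equation*}
\bigl|\mathcal{L}_\lambda(x_i;\bm\theta_{\mathcal{L}_\lambda}(\mathbf{S}^i))-\mathcal{L}_\lambda(x_i;\bm\theta_{\mathcal{L}_\lambda}(\mathbf{S}))\bigr| \;\leq\; \eta\|\bm\theta_{\mathcal{L}_\lambda}(\mathbf{S}^i)-\bm\theta_{\mathcal{L}_\lambda}(\mathbf{S})\|_2,
\end{equation*}
and taking expectation over $\mathbf{S}$ and $i\sim\mathrm{U}(n)$ produces the claimed bound $2\eta^2/(\mu n)$, with the factor of $2$ absorbing constants from the Taylor-expansion remainder and the worst-case combination of the two Lipschitz applications.

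The main obstacle, in my view, is making the perturbation step fully rigorous. The two-minimizer comparison naturally yields $\eta^2/(\mu n)$; recovering the exact constant $2$ in the statement requires carefully tracking the normalization of the empirical loss (whether the sum is divided by $n$ or $n-1$), and bounding the higher-order Taylor remainder using the ``close'' assumption in a quantitative way. An alternative route that avoids explicit Taylor truncation would be the Bousquet–Elisseeff comparison: write $f(\bm\theta^{\ast i})-f(\bm\theta^\ast)\geq\tfrac{\mu}{2}\|\bm\theta^{\ast i}-\bm\theta^\ast\|^2$ and symmetrically for $g$, add the two inequalities, and use $f-g=\tfrac{1}{n}\ell(x_i;\cdot)$ together with the Lipschitz bound on the right-hand side; this gives a clean $\|\bm\theta^{\ast i}-\bm\theta^\ast\|\leq\eta/(\mu n)$ without any linearization, and composes with one more Lipschitz step to produce the stated bound up to the factor of~$2$.
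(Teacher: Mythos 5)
Your proposal is correct and follows essentially the same route as the paper's own proof: the paper likewise first collapses the dropout regularizer via the coordinate-wise Bernoulli expectation to $\lambda(2p-p^2)\|\bm{\theta}-\bm{\theta}^0\|_2^2$, and then invokes a leave-one-out stability lemma for $\ell_2$-regularized objectives whose proof combines local strong convexity with modulus $\mu=\Lambda_{\min}+2\lambda$ (obtained from a second-order Taylor expansion at the optimum, justified by the closeness assumption) with two applications of the Lipschitz property. The one place you worry unnecessarily is the constant. The paper's factor of $2$ is not something you need to ``recover'': it arises because the paper's lemma uses only the \emph{one-sided} comparison --- strong convexity of the full objective at $\bm{\theta}_{\mathcal{L}_\lambda}(\mathbf{S})$, plus a decomposition showing that the leave-one-out portion of $\mathcal{L}_\lambda(u)-\mathcal{L}_\lambda(v)$ is nonpositive when $u$ minimizes the leave-one-out objective --- which yields $\tfrac{\mu}{2}\|u-v\|^2 \leq \tfrac{\eta}{n}\|u-v\|$ and hence $\|u-v\|\leq 2\eta/(\mu n)$. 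Your symmetric Bousquet--Elisseeff variant (adding the two strong-convexity inequalities) and your gradient-expansion variant would each give $\|u-v\|\leq \eta/(\mu n)$ and the sharper bound $\eta^2/(\mu n)$, which implies the stated one; there is no Taylor remainder to absorb into the constant. The only detail to pin down in a full write-up is the $n$ versus $n-1$ normalization of the leave-one-out loss, which the paper handles by factoring out $(1-\tfrac{1}{n})$ in its decomposition --- exactly the bookkeeping you flagged.
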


\begin{hproof}
    We first analyze the PHS upper bound of an arbitrary optimization algorithm equipped with $\ell_2$-regularizer in Lemma \ref{lem:apdx1}. Then we formulate our LoRA Dropout training objective as a similar problem with weighted $\ell_2$-regularizer and take it into Lemma \ref{lem:apdx1} to finish the proof. See Appendix \ref{apdx:proof_prop4.2} for detailed proofs.
    \vspace{-0.14in}
\end{hproof}

Moreover, existing works \cite{bousquet2002stability, elisseeff2005stability} have connected the stability and generalization error bound with the following lemma adopted from \citep[Theorem 11]{bousquet2002stability}. 
\begin{lemma}
\label{lemma:phs_errorbound}
 For any learning algorithm $\mathcal{M}$ having parameter $\bm{\theta}$ and bounded loss function $\ell$ satisfying $0 \leq |\ell(x) - \ell(x')| \leq C, \forall x,x'$. If $\mathcal{M}$ has a pointwise hypothesis stability $\beta$, with probability $1-\delta$, we have:
\begin{equation}
    R(\mathcal{M}, \mathbf{S}) \leq \hat{R}(\mathcal{M}, \mathbf{S})+\sqrt{\frac{C^2+12 C n \beta}{2 n \delta}},
\end{equation}
\vspace{-0.2in}
\end{lemma}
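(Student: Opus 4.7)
The plan is to convert the stability hypothesis into a bound on the second moment $\mathbb{E}_{\mathbf{S}}\!\left[(R(\mathcal{M},\mathbf{S})-\hat{R}(\mathcal{M},\mathbf{S}))^2\right]$ and then apply a Chebyshev-type tail bound. Concretely, once I show
$$\mathbb{E}_{\mathbf{S}}\!\left[(R(\mathcal{M},\mathbf{S})-\hat{R}(\mathcal{M},\mathbf{S}))^2\right]\;\leq\;\frac{C^2+12Cn\beta}{2n},$$
Chebyshev's inequality applied to the random variable $R-\hat{R}$ gives, for any $t>0$, $\mathbb{P}(R-\hat{R}\geq t)\leq \mathbb{E}_{\mathbf{S}}[(R-\hat{R})^2]/t^2$; setting the right-hand side to $\delta$ and solving for $t$ yields exactly the claimed bound. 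So the real work is the second-moment estimate.

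For that estimate, I would write $R=\mathbb{E}_{x'}[\ell(x';\bm{\theta}_\ell(\mathbf{S}))]$ with a fresh sample $x'$ independent of $\mathbf{S}$ and $\hat{R}=\tfrac{1}{n}\sum_{i=1}^{n}\ell(x_i;\bm{\theta}_\ell(\mathbf{S}))$, then expand the square. This produces three kinds of terms: a "diagonal" piece $\tfrac{1}{n^2}\sum_{i}\ell(x_i;\bm{\theta}_\ell(\mathbf{S}))^2$, which is directly $\leq C^2/n$ by boundedness; an "off-diagonal" piece $\tfrac{1}{n^2}\sum_{i\neq j}\ell(x_i;\bm{\theta}_\ell(\mathbf{S}))\ell(x_j;\bm{\theta}_\ell(\mathbf{S}))$; and cross terms pairing an $x_i$ with the independent $x'$. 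The diagonal piece contributes the $C^2$ in the final bound.

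The cross and off-diagonal terms would vanish by independence if $\bm{\theta}_\ell(\mathbf{S})$ did not depend on $x_i$, so the standard move is a \emph{stability swap}: replace $\bm{\theta}_\ell(\mathbf{S})$ by $\bm{\theta}_\ell(\mathbf{S}^i)$, which is independent of $x_i$. Each replacement changes a single loss evaluation by an amount whose expectation is at most $\beta$ by the definition of pointwise hypothesis stability, and pairing this with a bounded-loss factor of at most $C$ incurs an error of at most $C\beta$ per swap. After all swaps, the remaining expressions become integrals over independent pairs and can be renamed/canceled. A careful accounting of the number of required swaps across the expansion yields the coefficient $12Cn\beta$.

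The main obstacle is the bookkeeping in the swap step. In particular, PHS as stated is a \emph{remove-one} quantity ($\mathbf{S}\to\mathbf{S}^i$), while the cross terms naturally suggest a \emph{replace-one} perturbation ($x_i\to x'$); bridging these requires chaining two PHS applications, which is a source of constants and must be done with care to preserve the $1/n$ normalization coming from the empirical average. I would first prove the cleaner $\ell_2$-regularized stability bound (i.e.\ Lemma \ref{lem:apdx1}) to isolate the algebra, then verify that the swap argument is uniform in the choice of regularizer so that plugging in our $\lambda(2p-p^2)$-weighted objective from Proposition \ref{prop:phs} carries the result through unchanged. Once the swap accounting is done, Chebyshev finishes the argument in one line.
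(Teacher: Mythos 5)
Your plan is sound, but note that the paper does not actually prove this lemma: it is imported verbatim as Theorem 11 of \citet{bousquet2002stability}, so there is no in-paper argument to compare against. The route you sketch --- bounding $\mathbb{E}_{\mathbf{S}}[(R-\hat{R})^2]$ by $\tfrac{C^2}{2n}+6C\beta=\tfrac{C^2+12Cn\beta}{2n}$ via a term-by-term expansion with stability swaps, then finishing with Chebyshev --- is precisely the proof given there (their Lemma 9 plus Chebyshev), so you would be reproving the cited result rather than diverging from the paper. One reassurance on the step you flag as the main obstacle: no remove-one/replace-one bridging is needed, because pointwise hypothesis stability is already stated for the removal $\mathbf{S}\to\mathbf{S}^i$, and after swapping $\bm{\theta}_\ell(\mathbf{S})$ for $\bm{\theta}_\ell(\mathbf{S}^i)$ the point $x_i$ is independent of the hypothesis and can simply be renamed as the fresh test point $x'$, which is how the off-diagonal and cross terms cancel without chaining two stability applications.
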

where $R(\mathcal{M}, \mathbf{S}) = \mathbb{E}_x \ell(x; \bm{\theta})$ and $\hat{R}(\mathcal{M}, \mathbf{S}) = \frac{1}{n} \sum_{i=1}^n \ell(x_i; \bm{\theta})$ denote the empirical risk and generalization risk of algorithm $\mathcal{M}$ running on dataset $\mathbf{S}$, repectively.
This indicates that better algorithm stability will reduce the complexity of the adaptation function class.
Therefore, invoking the PHS upper bound $\beta$ of LoRA Dropout algorithm in Proposition \ref{prop:phs} to Lemma \ref{lemma:phs_errorbound}, we have the following theorem that depicts the generalization error bound of LoRA Dropout framework:
\begin{theorem}[LoRA Dropout Generalization Error Bound]
\label{theo:generalization_error}
    Given a LoRA Dropout rate $p$ and strength of sparsity regularization $\lambda$, if LoRA Dropout Algorithm have a $\eta$-Lipschitz loss function $\mathcal{L}_\lambda$, and $\bm{\theta}_{\mathcal{L}_\lambda} (\mathbf{S}^i)$ is close to $\bm{\theta}_{\mathcal{L}_\lambda} (\mathbf{S})$, the Hessian matrix $\nabla^2 \mathcal{L}(\bm{\theta}_{\mathcal{L}_\lambda} (\mathbf{S}))$ at $\bm{\theta}_{\mathcal{L}_\lambda} (\mathbf{S})$ is positive-semidefinite with a singular value decomposition $U \operatorname{diag}(\Lambda) U^{-1}, \Lambda=\left\{\Lambda_1, \cdots, \Lambda_m\right\}$ and $\Lambda_{\min }=$ $\min \left\{\Lambda_1, \cdots, \Lambda_m\right\}$, then for some constant $C$, we have with probability $1-\delta$,
    \begin{equation}
        R(\mathcal{M}, \mathbf{S}) \leq \hat{R}(\mathcal{M}, \mathbf{S})+\sqrt{\frac{C^2+\frac{24C\eta^2}{\Lambda_{\min} + 2\lambda(2p-p^2)}}{2 n \delta}}.
    \end{equation}
    \vspace{-0.2in}
\end{theorem}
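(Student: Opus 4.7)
The plan is to obtain Theorem \ref{theo:generalization_error} as a direct composition of the two results already established in this subsection, namely the pointwise hypothesis stability estimate of Proposition \ref{prop:phs} and the stability-to-generalization conversion of Lemma \ref{lemma:phs_errorbound}. First, I would verify that under the hypotheses of the theorem, the conditions of Proposition \ref{prop:phs} are satisfied verbatim: the loss $\mathcal{L}_\lambda$ is $\eta$-Lipschitz, the minimizers on $\mathbf{S}$ and $\mathbf{S}^i$ are close, and the Hessian $\nabla^2 \mathcal{L}(\bm{\theta}_{\mathcal{L}_\lambda}(\mathbf{S}))$ is positive semi-definite with smallest eigenvalue $\Lambda_{\min}$. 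Invoking the proposition then yields the PHS upper bound $\beta \leq \frac{2\eta^2}{(\Lambda_{\min} + 2\lambda(2p-p^2))\, n}$ for the LoRA Dropout algorithm.

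Next, I would verify the hypotheses of Lemma \ref{lemma:phs_errorbound}, which requires only that the loss $\mathcal{L}_\lambda$ be bounded in the sense $|\mathcal{L}_\lambda(x) - \mathcal{L}_\lambda(x')| \leq C$ for some constant $C$; this is the standard boundedness assumption of stability analysis and is precisely the constant $C$ named in the theorem statement. Applying the lemma with stability parameter $\beta$ gives, with probability at least $1 - \delta$, the inequality $R(\mathcal{M}, \mathbf{S}) \leq \hat{R}(\mathcal{M}, \mathbf{S}) + \sqrt{(C^2 + 12Cn\beta)/(2n\delta)}$. The proof is then completed by substituting the PHS bound from the previous step into this inequality; the factor of $n$ cancels cleanly in the product $12Cn\beta$, producing the claimed term $\frac{24C\eta^2}{\Lambda_{\min} + 2\lambda(2p-p^2)}$ inside the square root. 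This cancellation is exactly the mechanism by which stability of order $O(1/n)$ translates into a nontrivial generalization bound.

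I do not anticipate any substantive obstacle: both ingredients are already in place, and the argument is essentially bookkeeping. The only point that deserves careful attention is ensuring that the ``$C$'' appearing in Lemma \ref{lemma:phs_errorbound} is identified consistently with the ``$C$'' in the theorem statement, and that the boundedness hypothesis needed by the lemma is either assumed or inherited from the Lipschitz and domain assumptions already in force. Everything else is a one-line algebraic substitution.
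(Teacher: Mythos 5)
Your proposal is correct and matches the paper's own derivation exactly: the theorem is obtained by substituting the PHS bound $\beta$ from Proposition~\ref{prop:phs} into Lemma~\ref{lemma:phs_errorbound}, with $12Cn\beta$ collapsing to $\frac{24C\eta^2}{\Lambda_{\min}+2\lambda(2p-p^2)}$. Your remark about identifying the boundedness constant $C$ consistently is the right thing to check, and the paper handles it the same way you do, by simply carrying the constant from the lemma into the theorem statement.
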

This theorem reveals the theoretical mechanism behind the trade-off between underfitting and overfitting of LoRA Dropout. The theorem shows that the complexity of adaptation function class (i.e., the gap between empirical and generalization risks) gets larger as the dropout rate gets smaller. Specifically, when applying traditional LoRA without dropout, the gap will be the largest, which depicts the high risk of overfitting with LoRA fine-tuning. However, when the dropout rate gets too large and tends to 1, it is equivalent to conducting no fine-tuning, which increases the empirical risk and makes the model underfit the training data. Hence, an appropriate dropout mechanism can theoretically balance a trade-off between the empirical risk minimization and the complexity of adaptation function classes, thereby enhancing the test-time performances as well as learning sufficiently from data.

\subsection{Error Bound of LoRA Dropout Ensemble Classifier}
\label{section:ensemble_theory}
For the inference phase, we aim to further control the error bound of LoRA Dropout by ensembling multiple test-time dropout instances. In this subsection, we provide a theoretical analysis of how the ensemble tightens the error bound.

\begin{table*}[t]
\caption{Results with DeBERTaV3-base on GLUE development set. The best-performing method is highlighted in \textbf{bold}. We report the matched accuracy for MNLI, Matthew’s correlation for CoLA, average correlation for STS-B, and accuracy for other tasks. Results are averaged over 5 runs using different random seeds.}
\label{tab:glue_results}
\begin{center}
\begin{small}
\begin{tabular}{l|c|ccccccccc}
\toprule
\multirow{2}{*}{\textbf{Method}}&\multirow{2}{*}{\textbf{\#Params}} &\textbf{MNLI} &\textbf{ SST-2}&\textbf{CoLA} &\textbf{QQP}&\textbf{QNLI}&\textbf{RTE}&\textbf{MRPC}&\textbf{STS-B}& \textbf{All} \\
 && M-Acc & Acc & Mcc& Acc &Acc &Acc &Acc &Corr &Avg.\\
\midrule
Full Fine-Tuning &184M   & 89.90 & 95.63& 69.19 & \textbf{92.40} & 94.03& 83.75 & 89.46 & 91.60& 88.25 \\ \midrule
BitFit &0.1M   & 89.37 & 94.84& 66.96 & 88.41 & 92.24& 78.70 & 87.75 & 91.35& 86.20 \\
H-Adapter &1.22M   & 90.13 & 95.53& 66.64 & 91.91 & 94.11& 84.48 & 89.95 & 91.48& 88.28 \\
P-Adapter &1.18M   & 90.33 & 95.61& 68.77 & 92.04 & 94.29& 85.20 & 89.46 & 91.54& 88.41 \\

\midrule
$\textup{LoRA}_{r=8}$ &1.33M   & 90.65 & 94.95& 69.82 & 91.99 & 93.87& 85.20 & 89.95 & 91.60& 88.50 \\
LoRA+Dropout&1.33M& \textbf{90.85}&	95.87 &71.32 &	92.22&	\textbf{94.56} & 88.09&\textbf{91.42} 	&92.00 & 89.54  \\
\midrule
AdaLoRA&1.27M   & 90.76 &96.10& 71.45 & \textbf{92.23} &94.55& 88.09 & 90.69&91.84& 89.46 \\
AdaLoRA+Dropout& 1.27M& 90.75& \textbf{96.22} & \textbf{72.04} &92.04&94.47 &\textbf{88.81}& 91.18&\textbf{92.07} & \textbf{89.70}  \\
\bottomrule
\end{tabular}
\end{small}
\end{center}
\vskip -0.15in
\end{table*}

During the fine-tuning phase, we optimize Eq.(\ref{eq:sps_final}) through accumulating gradient steps under different dropout instances. This fine-tuning procedure is essentially optimizing the generalization risks given the distribution $\mathcal{D}$ of model parameters $\bm{\theta}$, which is $\mathbb{E}_{\bm{\theta} \sim \mathcal{D}} \mathbb{E}_{(x,y)} \mathcal{L}_\lambda(\mathcal{M}(x; \bm{\theta}), y)$, where $\mathcal{M}(x; \bm{\theta})$ denotes the output of model $\mathcal{M}$ given the input $x$ parameterized by $\bm{\theta}$.
During the inference phase, with the test-time ensemble strategy, we are actually aggregating model outputs across the distribution $\mathcal{D}$ of parameter $\bm{\theta}$ to conduct final predictions, namely the ensemble classifier, which has an error of $\mathbb{E}_{(x,y)} \mathcal{L}_\lambda(\mathbb{E}_{\bm{\theta} \sim \mathcal{D}} \mathcal{M}(x; \bm{\theta}), y)$.
We present the following theorem that depicts a tighter generalization error bound with the ensemble classifier:

\begin{theorem}[Error Bound of Test-time LoRA Dropout Ensemble]
\label{theo:bayes_cls_error}
    If the loss function $\mathcal{L}_\lambda$ is convex w.r.t. the final activation $\bm{h}$ of model $\mathcal{M}$ before the output layer (e.g., softmax), then we have:
    \begin{equation}
    \begin{small}
    \begin{aligned}
        \mathbb{E}_{(x,y)} \mathcal{L}_\lambda(\mathbb{E}_{\bm{\theta} \sim \mathcal{D}} \mathcal{M}(x; \bm{\theta}), y) \leq \mathbb{E}_{\bm{\theta} \sim \mathcal{D}} \mathbb{E}_{(x,y)} \mathcal{L}_\lambda(\mathcal{M}(x; \bm{\theta}), y).
    \end{aligned}
    \end{small}
    \label{eq:bayes_cls}
    \end{equation}
    \vspace{-0.25in}
\end{theorem}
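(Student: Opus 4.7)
The plan is straightforward: apply Jensen's inequality pointwise in $(x,y)$, then exchange expectations via Fubini. The whole content is a convexity argument, and the only nontrivial aspect is carefully aligning the notation $\mathcal{M}(x;\bm{\theta})$ with the representation (the pre-output activation $\bm{h}$) in which convexity is assumed.

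Concretely, I would first fix an arbitrary pair $(x,y)$ and interpret $\mathcal{M}(x;\bm{\theta})$ as the vector of pre-output activations, so that the ensemble output $\mathbb{E}_{\bm{\theta}\sim\mathcal{D}}\,\mathcal{M}(x;\bm{\theta})$ is an average taken in the space where the loss is convex. By the hypothesis that $\bm{h}\mapsto\mathcal{L}_\lambda(\bm{h},y)$ is convex, Jensen's inequality gives
\[
\mathcal{L}_\lambda\!\bigl(\mathbb{E}_{\bm{\theta}\sim\mathcal{D}}\,\mathcal{M}(x;\bm{\theta}),\,y\bigr)\;\leq\;\mathbb{E}_{\bm{\theta}\sim\mathcal{D}}\,\mathcal{L}_\lambda\!\bigl(\mathcal{M}(x;\bm{\theta}),\,y\bigr).
\]
Taking $\mathbb{E}_{(x,y)}$ of both sides preserves the direction of the inequality, and since the loss is non-negative and integrable in any sensible training setup, Fubini's theorem permits swapping the order of integration on the right-hand side:
\[
\mathbb{E}_{(x,y)}\,\mathbb{E}_{\bm{\theta}\sim\mathcal{D}}\,\mathcal{L}_\lambda\!\bigl(\mathcal{M}(x;\bm{\theta}),y\bigr)\;=\;\mathbb{E}_{\bm{\theta}\sim\mathcal{D}}\,\mathbb{E}_{(x,y)}\,\mathcal{L}_\lambda\!\bigl(\mathcal{M}(x;\bm{\theta}),y\bigr).
\]
Chaining the two displays yields exactly the claimed bound in Eq.(\ref{eq:bayes_cls}).

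The only real subtlety, and thus the main obstacle, is notational rather than analytic. The convexity hypothesis is stated on the pre-output activation $\bm{h}$, whereas the ensemble in Eq.(\ref{eq:ensemble_method}) is written as an average of $\mathcal{M}(\bm{x};\cdot)$. For Jensen to fire cleanly, the averaging must be performed in the same representation in which convexity is assumed. In the standard softmax-plus-cross-entropy setting this matches the usual practice of averaging logits before applying the softmax (cross-entropy is convex in logits), so the assumption is essentially without loss of generality; but this alignment should be stated explicitly so that the reader does not confuse output probabilities, a space in which cross-entropy is not generally convex, with pre-softmax logits. Beyond this interpretive matching, no further machinery is needed.
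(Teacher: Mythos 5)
Your proof is correct and follows essentially the same route as the paper's: a pointwise application of Jensen's inequality in the activation space where $\mathcal{L}_\lambda(\cdot,y)$ is convex, followed by an exchange of the expectations over $(x,y)$ and $\bm{\theta}$. Your explicit note that the averaging must be performed over the pre-output activations $\bm{h}$ (the pushforward of $\mathcal{D}$ under $\bm{\theta}\mapsto\mathcal{M}(x;\bm{\theta})$) rather than over output probabilities is in fact a slightly more careful statement of the argument than the paper's appendix version, which loosely attributes the Jensen step to convexity in $\bm{\theta}$.
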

\begin{hproof}
    Taking expectation on parameter $\bm{\theta}$ is equivalent to taking expectation on the final hidden activation $\bm{h} \in \mathbb{R}^d$ from a certain distribution. Then simply apply Jensen inequality under the convex condition of $\bm{h} \in \mathbb{R}^d$ and the inequality holds. See Appendix \ref{apdx:proof_theo4.5} for detailed proofs.
    \vspace{-0.15in}
\end{hproof}
Moreover, the convexity holds for most cases in LLM training or fine-tuning scenarios, as we often take cross-entropy as the loss function and the softmax as the final output layer, and those functions are convex in the entire space $\mathbb{R}^d$. Hence, the inequality says that the generalization error of the ensemble classifier (i.e., LHS of (\ref{eq:bayes_cls})) is no greater than the training generalization error (i.e., RHS of (\ref{eq:bayes_cls})) for most LLM tuning scenarios, implying that the ensemble classifier with LoRA Dropout can further compress the error bound given by the LoRA Dropout fine-tuning and demonstrate better test-time generalizability. 

To summarize, Theorem \ref{theo:generalization_error} and \ref{theo:bayes_cls_error} together depict the full theoretical sketch of our practical framework. The fine-tuning phase applies LoRA Dropout to control the generalization error by balancing the trade-off between the empirical risk minimization and the complexity of adaptation, and the inference phase applies multiple dropout instances to accomplish an ensemble classifier with a tighter error bound and further enhances the test-time generalizability.

\section{Experiments}

In this section, we conduct a series of experiments to validate the effectiveness of our proposed LoRA Dropout framework. We incorporate LoRA Dropout into LoRA-series works, LoRA and AdaLoRA, and compare them with original models and other baselines on various tasks. Then we test whether our LoRA Dropout can further improve model calibration. Lastly, we conduct ablation studies to verify the effectiveness of each component in our method.


\textbf{Baselines  }
We compared the our method with following state-of-the-art PEFT methods. 
\begin{itemize}[leftmargin=*]
\vspace{-0.05in}
    \item \textbf{Full fine-tuning}~\cite{howard2018universal}-  All pre-trained parameters within the model get trained and optimized.
    \item \textbf{BitFit}~\cite{zaken2022bitfit}- Only the bias vectors from the model parameters get fine-tuned.
    \item \textbf{H-Adapter}~\cite{houlsby2019parameter}- The adapters are inserted between the MLP and the self-attention modules.
    \item \textbf{P-Adapter}~\cite{pfeiffer2020adapterfusion}- Adapter layers are applied only after the MLP or the LayerNorm layer.
    \item \textbf{LoRA}~\cite{zhang2023adaptive}- LoRA decomposes the learnable delta parameter matrix into two low-rank matrices to improve parameter efficiency.
    \item \textbf{AdaLoRA}~\cite{zhang2023adaptive}- AdaLoRA introduces an adaptive parameter budget by gradually pruning the rank of LoRA based on sensitivity-based importance scores.
    
\end{itemize}

When comparing with baseline models, we keep tunable parameter budgets for all methods aligned. We pick the hyperparameters settings following our base models, i.e. LoRA and AdaLoRA, and tune the hyperparameters that are exclusive to our model. More detailed experiment settings can be viewed in Appendix \ref{apdx:experimental_details} and \ref{apdx:dataset_details}.

\begin{table*}[ht]
  \centering 
  \caption{Results with DeBERTaV3-base on SQuAD v1.1 and SQuADv2.0. \#Params is the ratio of trainable parameters. We report EM (Exact Match) and F1 for each model. The best results in each setting are shown in \textbf{bold} and the second best methods are  \underline{underlined}.}
  \vskip 0.08in
  \begin{small}
    \begin{tabular}{l|cccccc|cccccc}
    \toprule
      & \multicolumn{6}{c|}{\textbf{SQUAD v1.1}} & \multicolumn{6}{c}{\textbf{SQUAD v2.0}} \\

    \midrule
             \textbf{\#Params}        & \multicolumn{2}{c}{0.16\%}     & \multicolumn{2}{c}{0.32\%}        & \multicolumn{2}{c|}{0.65\%}    & \multicolumn{2}{c}{0.16\%}       & \multicolumn{2}{c}{0.32\%}        & \multicolumn{2}{c}{0.65\%}         \\
    \midrule
            \textbf{Metric}       & EM&F1         & EM&F1        & EM&F1   & EM&F1       & EM&F1     & EM&F1      \\
    \midrule

    HAdapter   	&85.3 &92.1&	86.1&92.7	&	86.7&92.9&	84.3&87.3&	84.9&87.9 &85.4&88.3 \\
    PAdapter    &85.9&92.5	&86.2&92.8&	86.6&93.0	&	84.5&87.6&	84.9&87.8&	84.5&87.5 \\
    \midrule
    LoRA          &86.6&92.9	&86.7&93.1&	86.7&93.1	&	83.6&86.7&	84.5&87.4&	85.0&88.0 \\
    LoRA+Dropout  &\textbf{88.2}&\underline{93.8}&	\textbf{88.7}&\underline{94.1}&	\textbf{88.7}&\underline{94.2}		&85.4&88.4&	\textbf{86.0}&\underline{88.8}	&\underline{86.1}&\underline{88.9}   \\
    \midrule
        AdaLoRA &87.5&93.6&	87.5&93.7&	\underline{87.6}&93.7		&\underline{85.7}&\textbf{88.8}&	85.5&88.6	&86.0&\underline{88.9}     \\  
    AdaLoRA+Dropout  &\underline{88.1}&\textbf{93.9}&	\underline{88.5}&\textbf{94.2}&	\textbf{88.7}&\textbf{94.3}		&\textbf{85.8}&\underline{88.6}&	\underline{85.9}&\textbf{88.9}	&\textbf{86.3}&\textbf{89.1}   \\ 
    \bottomrule
    \end{tabular}
    \label{tab:squad_results}
\end{small}

\vspace{-10 pt}
\end{table*}

\subsection{Natural Language Understanding}\label{exp:main results}

\textbf{Settings  }
Following previous work~\cite{zhang2023adaptive}, we use the General Language Understanding Evaluation (GLUE) benchmark~\cite{wang2018glue} for evaluation. Our experiments contain eight different tasks from the GLUE benchmark. All models are fine-tuned on the DeBERTaV3-base~\cite{he2021debertav3} pre-trained model.

\textbf{Results  }
The results of different models on the GLUE benchmark are shown in Table \ref{tab:glue_results}. From the results, we could find that LoRA-series models with our LoRA Dropout framework consistently outperform other baselines, and achieve the best performance on the overall result among eight NLU tasks. Moreover, when compared with the original LoRA and AdaLoRA models, models with our dropout method always achieve superior performance, indicating that the proposed LoRA Dropout framework could help LoRA-based models control overfitting and improve generalization ability on downstream tasks.

\subsection{Question Answering}\label{exp:question answering}

\textbf{Settings  } We conduct the question answering task on the SQuAD (Stanford Question Answering Dataset) benchmark with two versions, SQuAD v1.1~\cite{rajpurkar2016squad} and SQuAD v2.0~\cite{rajpurkar2018know}. The DeBERTaV3-base model is used as base pre-trained model. We report the Exact Match accuracy and F1 score for each model.

\textbf{Results  }
The results of different models on the SQuAD benchmarks are shown in Table \ref{tab:squad_results}. The results further validate the conclusions that we obtained from Table \ref{tab:glue_results}. The LoRA Dropout method helps the base model (i.e., LoRA and AdaLoRA) to achieve better performances on both SQuAD benchmarks. Moreover, by varying the budget of trainable parameters (i.e., the hidden dimension of adapters and the rank of LoRA module), we could find that our method has consistently superior performance under various parameter budgets, revealing its effectiveness.

\subsection{Instruction Tuning}
\textbf{Settings  }
We test the models' instruction-following ability by conducting instruction tuning. Specifically, we choose LLaMA2-7B~\cite{touvron2023llama} as the pre-trained base model and fine-tune on the Alpaca-clean dataset\footnote{https://huggingface.co/datasets/yahma/alpaca-cleaned}~\cite{alpaca} with both original LoRA method and LoRA with LoRA Dropout. MMLU benchmark~\cite{hendrycks2020measuring} is employed to evaluate each model.

\textbf{Results  } We report the accuracy on the MMLU benchmark under 0-shot setting in Table \ref{tab:llama2_results}. From the results, we could find that the fine-tuned LoRA model performs worse than the original pre-trained LLaMA2-7B model, which indicates that LoRA overfits on the fine-tuning dataset and cannot generalize well on the evaluation benchmark due to the distribution disparity between fine-tuning set and evaluation set. However, LoRA with our dropout framework achieves much better accuracy than both models, which again demonstrates our method's ability to control overfitting.

\begin{table}[t]
\caption{Results of instruction tuning on LLaMA2-7B. We report Accuracy(\%) on MMLU 0-shot setting. Best results are in \textbf{bold}.}
\label{tab:llama2_results}
\begin{center}
\begin{small}
\begin{tabular}{l|ccccc}
\toprule
\multirow{2}{*}{\textbf{Method}} & \multicolumn{5}{c}{\textbf{MMLU (0-shot)}}                                         \\
                        & STEM           & Social         & Hum.           & Other          & Avg.           \\ \midrule
LLaMA2-7B      & 33.31          & 46.78          & 38.76          & 45.04          & 40.79          \\
$\textup{LoRA}_{r=16}$          & \textbf{34.40} & 45.15          & 38.19          & 45.60          & 40.61          \\ \midrule
LoRA+Dropout   & 34.07          & \textbf{48.71} & \textbf{40.52} & \textbf{47.18} & \textbf{42.47} \\ \bottomrule
\end{tabular}
\end{small}
\end{center}
\vspace{-10 pt}
\end{table}

\subsection{Confidence Calibration}\label{exp:confidence calibration}
\textbf{Settings  } As large-scale pre-trained models often exhibit overconfidence~\cite{jiang2021can,xiao2022uncertainty,he2023preserving,tian2023just}, we evaluate the confidence calibration~\cite{ guo2017calibration} of each model, which serves as an effective analytical method for evaluating model reliability~\cite{ zhu2023calibration}. Specifically, we employ the Expected Calibration Error (ECE) for measuring the calibration performance, and assess the confidence calibration of different fine-tuned models on a few tasks from the GLUE benchmark based on the DeBERTaV3-base model.

\textbf{Results  }
We report the ECE results of each model in Table \ref{tab:ECE_results} when it reaches the best performance on the development set, and also provide the ECE curves of different models in Figure \ref{fig:ece_curve} when fine-tuned on the RTE task. From the results we could find that LoRA Dropout could consistently reduce the ECE compared with its base model, leading to better-calibrated models. One possible explanation is that LoRA Dropout can be viewed as a variant of the MC dropout from a Bayes perspective. By randomly dropping parameters during training, we are estimating the posterior weight distributions with a given downstream task, making the model a kind of Bayes neural network, which is known to achieve good calibration~\cite{kristiadi2020being}.

\begin{table}[t]
\caption{The Expected Calibration Error (ECE $\downarrow$) of different models fine-tuned on tasks from GLUE benchmark.}

\label{tab:ECE_results}
\vskip 0.15in
\begin{center}
\begin{small}
\begin{tabular}{l|ccc}
\toprule
\textbf{Method} &\textbf{ SST-2} &\textbf{RTE}&\textbf{MRPC} \\
\midrule
$\textup{LoRA}_{r=8}$& 3.61 & 14.45 & 11.00  \\
LoRA+Dropout& 3.07 & 9.88 & 8.56   \\
\midrule
AdaLoRA&3.09 &12.12&  8.62\\
AdaLoRA+Dropout& 2.59&11.15 &5.07\\
\bottomrule
\end{tabular}
\end{small}
\end{center}
\vskip -0.1in
\end{table}

\begin{figure}[t]
\centering
  \includegraphics[width=3.0 in]{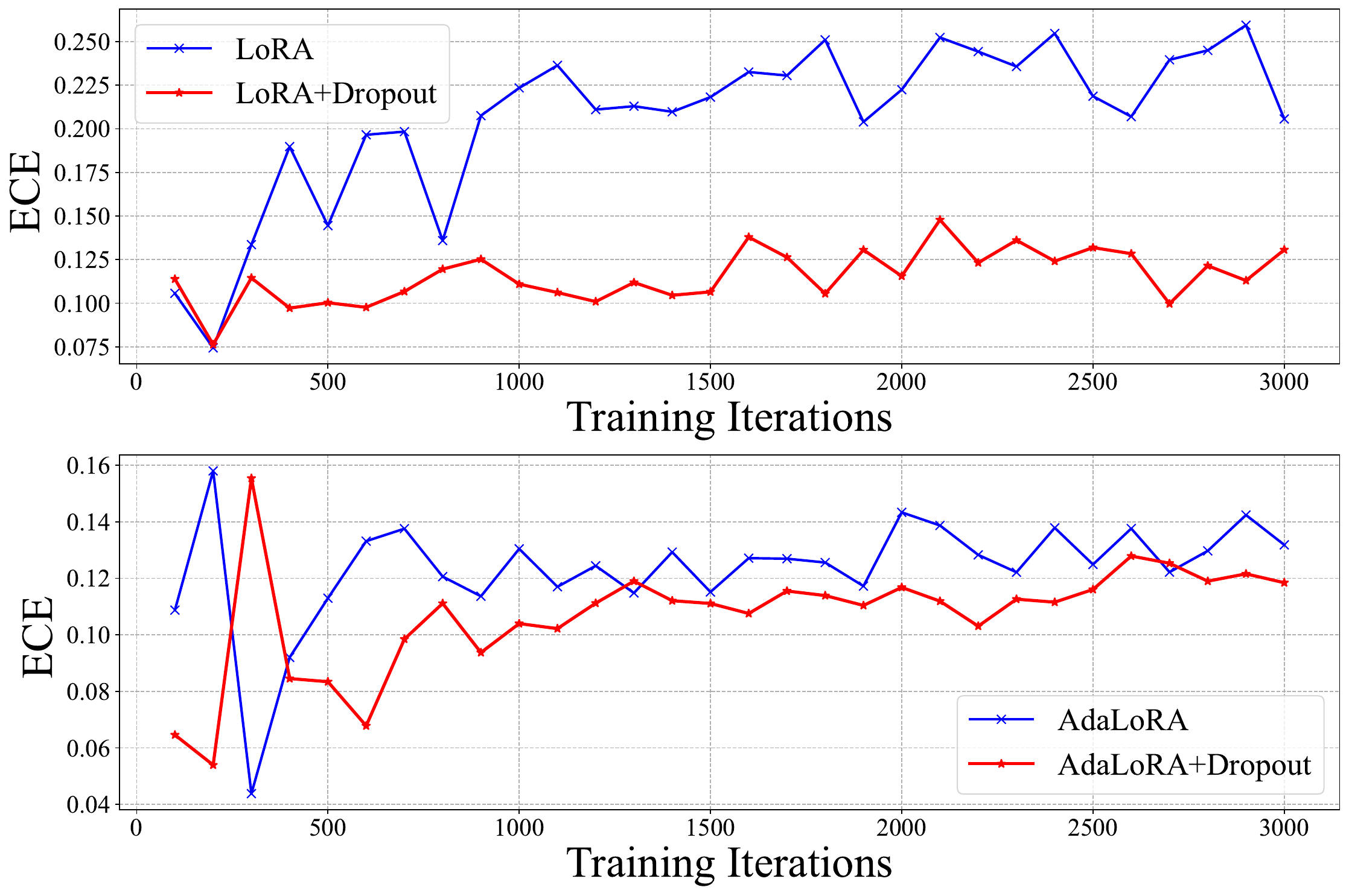}
  \vspace{-10 pt}
  \caption{The Expected Calibration Error (ECE $\downarrow$) during the fine-tuning process of RTE task.}
  \vspace{-0.1 in}
  \label{fig:ece_curve}
\end{figure}

\subsection{Ablation Studies and Sensitivity Analysis}
\textbf{Effect of Train/Test Dropout  }
We conduct experiments on the effects of dropout during training and testing on the MRPC and CoLA dataset, and the results are shown in Figure \ref{fig:ablation}. We compared our method (LoRA with our LoRA Dropout framework, denoted as \textit{Drop}) with the following variants:
$\mathrm{\textit{Drop}}_{train}$ denotes training with dropout and testing without dropout ensemble. $\mathrm{\textit{Drop}}_{test}$ denotes training without dropout and testing with dropout ensemble of dropout instances. And \textit{NoDrop} denotes model without LoRA Dropout. 
We could find the importance of conducting dropout during fine-tuning from the bad performance of $\mathrm{\textit{Drop}}_{test}$. It performs worse than the vanilla \textit{NoDrop} model since testing dropout may break some hidden semantic structure of parameters learned from training.
We can also verify the effectiveness of test-time ensemble strategy from the decrease bewteen \textit{Drop} and $\mathrm{\textit{Drop}}_{train}$, which aligns with our theoretical derivation that the ensemble strategy would further compress the error bound. 

\textbf{Effect of Dropout Rate  }
We conduct experiments on the effects of dropout rate $p$ on the MRPC dataset, and show the results in Figure \ref{fig:sensitive}. As the dropout rate increases, the performance first improves and then drops. This aligns with our theoretical derivation in Section \ref{sec:generalization_analysis} that a proper dropout rate would help balance the empirical risk minimization and complexity of the adaptation function. A small dropout rate might fail to introduce sufficient sparsity and lead to overfitting, while an excessively large dropout rate would result in too few trainable parameters, making the adapter lose its expressive power.

\textbf{Effect of Number of Sampled Dropout Instances  } 
We conduct experiments on the effects of dropout instance number $N$ on the MRPC dataset, and the results are shown in Figure \ref{fig:sensitive}. From the results, we can find the model performance improves as the sample number increases. This is reasonable since with a larger sample number, more dropout instances can be introduced during training and more models get aggregated during the test-time ensemble, leading to more accurate estimations of the outputs over the parameter distribution. However, a larger sample number will also lead to higher training and inference costs, thus picking an appropriate $N$ is necessary for a better balance between accuracy and computational cost.

\begin{figure}[t]
\centering
  \includegraphics[width=3.2 in]{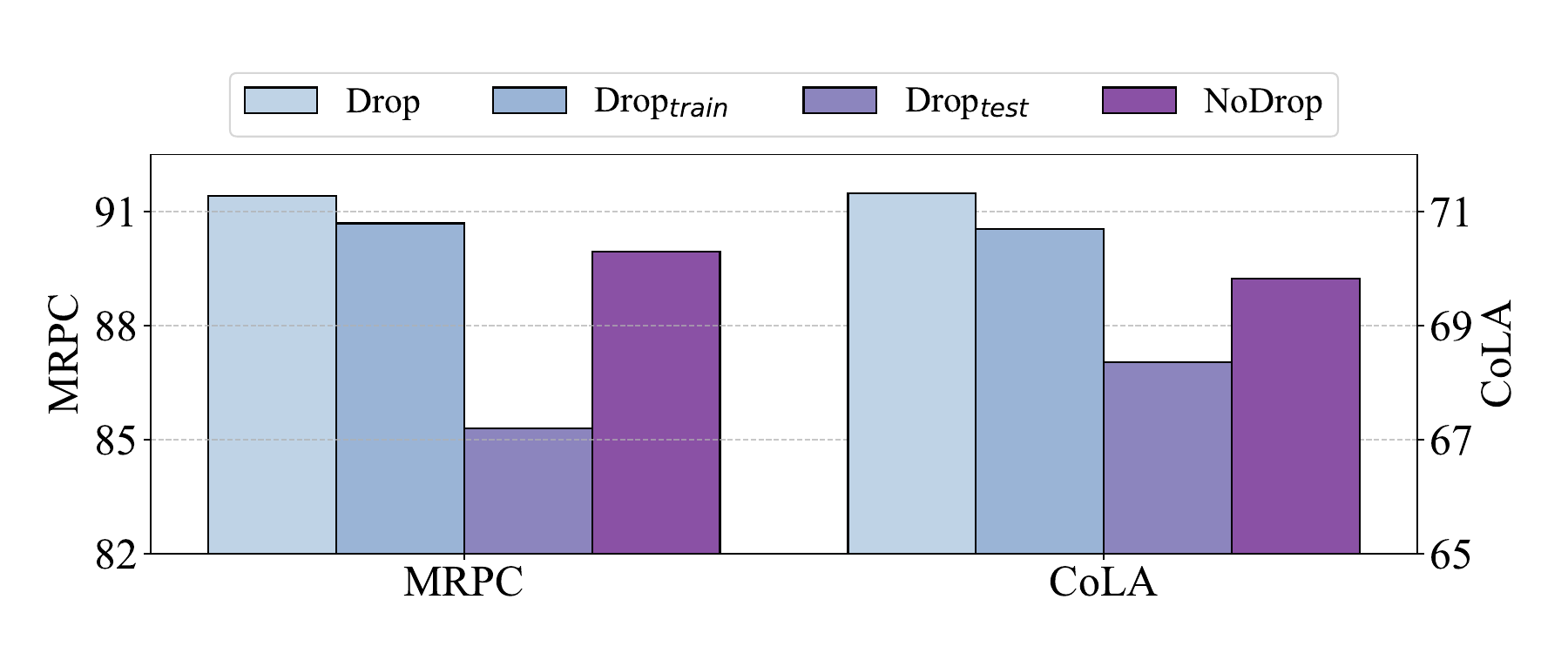}
  \vspace{-10 pt}
  \caption{Ablation studies on the dropout strategy.}
  \label{fig:ablation}
\end{figure}

\begin{figure}[t]
\centering
  \includegraphics[width=3.2in]{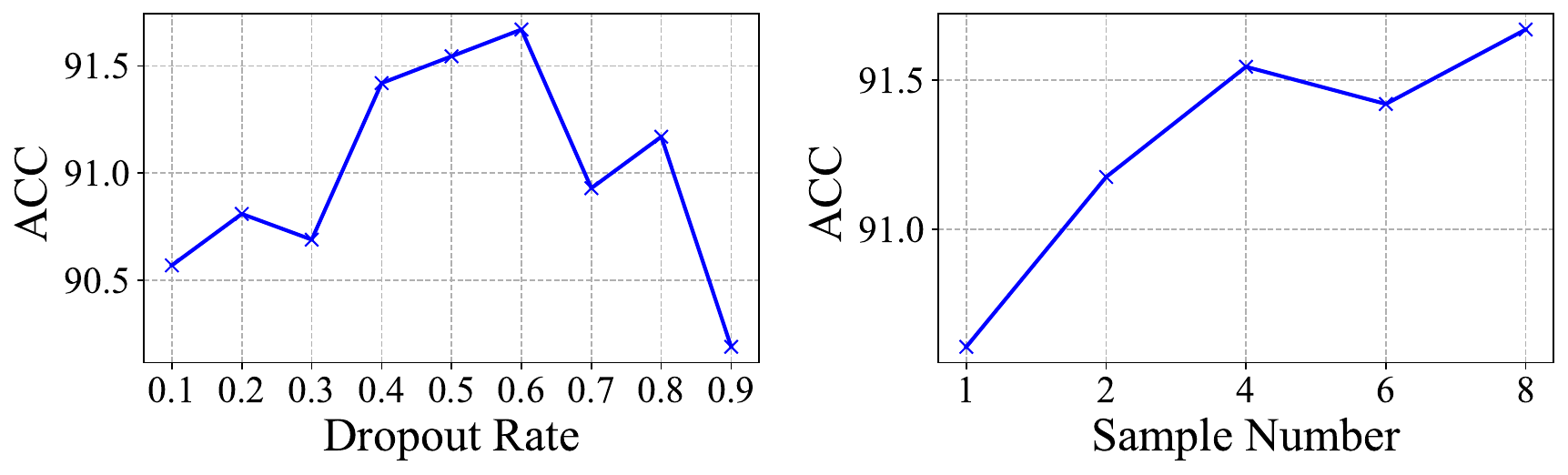}
  \vspace{-10 pt}
  \caption{Sensitive analysis on dropout rate and sample number.}
  \label{fig:sensitive}
  \vspace{-0.1 in}
\end{figure}

\section{Conclusions and Future Work}
To control the overfitting risk when fine-tuning on downstream tasks, in this paper, we propose a theoretically grounded LoRA Dropout framework designed for LoRA-based PEFT methods. Theoretical analyses from the perspective of sparse show that sparsity introduced by LoRA Dropout helps tighten the between empirical and generalization risks and thereby control overfitting. A test-ensemble strategy is proposed based on LoRA Dropout and theoretically shown to further compress the error bound. We conduct experiments on various tasks and PLMs, and the results demonstrate the effectiveness of our method on improving model accuracy and calibration.

Despite the promising results, we still want to point out the limitations. Though LoRA Dropout introduces no additional tunable parameters compared to LoRA, sampling multiple dropout instances during training and testing does introduces considerable time overhead. For future work, we aim to design a parallel computing framework for LoRA Dropout, expecting to improve in both performance and efficiency. 



\bibliography{example_paper}
\bibliographystyle{icml2024}

\newpage
\appendix
\onecolumn

\setcounter{table}{0} 
\setcounter{figure}{0} 
\setcounter{equation}{0} 

\section{Proofs of Theoretical Results}
\numberwithin{equation}{section}
\numberwithin{table}{section}
\subsection{Proof of Proposition \ref{prop:phs}} \label{apdx:proof_prop4.2}
We first prove a Lemma that describes the pointwise hypothesis stability of an optimization problem with $\ell_2$-regularizer, which provides an upper bound related to a constant describing the specific shape around the local optima. The symbols follow those in the main text. We first denote $\mathcal{L}(\bm{\theta}) = \frac{1}{n} \sum_i \mathcal{L}(x_i; \bm{\theta})$.

\begin{lemma}
    Consider the learning algorithm $\mathcal{M}$ optimizing the following loss function:
    \begin{equation*}
        \min_{\theta}\mathcal{L}_\lambda(\bm{\theta}) := \min_{\bm{\theta}} \mathcal{L}(\bm{\theta}) + \lambda ||\bm{\theta} - \bm{\theta}^0||_2^2.
    \end{equation*}If the loss function $\mathcal{L}$ is $\eta$-Lipschitz, and $\bm{\theta}_{\mathcal{L}_\lambda}(\mathbf{S})$ is close to $\bm{\theta}_{\mathcal{L}_\lambda}(\mathbf{S}^i)$. The Hessian matrix $\nabla^2 \mathcal{L}(\bm{\theta}_{\mathcal{L}_\lambda} (\mathbf{S}))$ at $\bm{\theta}_{\mathcal{L}_\lambda} (\mathbf{S})$ is positive-semidefinite with a singular value decomposition $U \operatorname{diag}(\Lambda) U^{-1}, \Lambda=\left\{\Lambda_1, \cdots, \Lambda_m\right\}$ and $\Lambda_{\min }=$ $\min \left\{\Lambda_1, \cdots, \Lambda_m\right\}$, Then $\mathcal{M}$ has pointwise hypothesis stability $\beta = \frac{2 \eta^2}{\left(\Lambda_{\min }+2\lambda \right) n}$, which is:
    \begin{equation*}
        \mathbb{E}_{\mathbf{S}, i \sim \mathrm{U}(n)}\left|\mathcal{L}_{\lambda}\left(x_i; \bm{\theta}_{\mathcal{L}_\lambda} (\mathbf{S}^i)\right)-\mathcal{L}_{\lambda}\left(x_i; \bm{\theta}_{\mathcal{L}_\lambda} (\mathbf{S})\right)\right| \\
    \leq \frac{2 \eta^2}{\left(\Lambda_{\min }+2\lambda \right) n} .
    \end{equation*}
    \label{lem:apdx1}
\end{lemma}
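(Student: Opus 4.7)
The plan is to reduce the pointwise stability bound to a parameter-space perturbation bound via the Lipschitz hypothesis, and then control that perturbation using the strong-convexity structure that the regularizer forces on the objective at its optimum. Concretely, for any sample $x_i$, the $\eta$-Lipschitz condition yields
\begin{equation*}
\bigl|\mathcal{L}_{\lambda}(x_i;\bm{\theta}_{\mathcal{L}_\lambda}(\mathbf{S}^i))-\mathcal{L}_{\lambda}(x_i;\bm{\theta}_{\mathcal{L}_\lambda}(\mathbf{S}))\bigr| \leq \eta\,\bigl\|\bm{\theta}_{\mathcal{L}_\lambda}(\mathbf{S}^i)-\bm{\theta}_{\mathcal{L}_\lambda}(\mathbf{S})\bigr\|,
\end{equation*}
so the entire task reduces to showing $\|\bm{\theta}_{\mathcal{L}_\lambda}(\mathbf{S}^i)-\bm{\theta}_{\mathcal{L}_\lambda}(\mathbf{S})\| \leq \tfrac{2\eta}{(\Lambda_{\min}+2\lambda)n}$ in expectation over $i$.

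To obtain the parameter perturbation bound, I would use the first-order optimality conditions $\nabla\mathcal{L}_\lambda(\bm{\theta}_{\mathcal{L}_\lambda}(\mathbf{S}))=0$ and $\nabla\mathcal{L}_\lambda^{(i)}(\bm{\theta}_{\mathcal{L}_\lambda}(\mathbf{S}^i))=0$, where $\mathcal{L}_\lambda^{(i)}$ is the leave-one-out objective. Subtracting and rearranging, the gradient of $\mathcal{L}_\lambda$ at the perturbed optimum equals the single-sample contribution $\tfrac{1}{n}\nabla\mathcal{L}(x_i;\bm{\theta}_{\mathcal{L}_\lambda}(\mathbf{S}^i))$ (up to a renormalization that only rescales this quantity), whose norm is bounded by $\eta/n$ via the Lipschitz assumption. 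Then I would Taylor-expand $\nabla\mathcal{L}_\lambda$ around $\bm{\theta}_{\mathcal{L}_\lambda}(\mathbf{S})$; because the two minimizers are close by assumption, the first-order remainder gives
\begin{equation*}
\nabla\mathcal{L}_\lambda(\bm{\theta}_{\mathcal{L}_\lambda}(\mathbf{S}^i)) \approx \nabla^2\mathcal{L}_\lambda(\bm{\theta}_{\mathcal{L}_\lambda}(\mathbf{S}))\,\bigl(\bm{\theta}_{\mathcal{L}_\lambda}(\mathbf{S}^i)-\bm{\theta}_{\mathcal{L}_\lambda}(\mathbf{S})\bigr).
\end{equation*}
Since $\nabla^2\mathcal{L}_\lambda=\nabla^2\mathcal{L}+2\lambda I$, its smallest singular value is at least $\Lambda_{\min}+2\lambda$, so it is invertible and the inverse has spectral norm at most $(\Lambda_{\min}+2\lambda)^{-1}$. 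Combining, I obtain $\|\bm{\theta}_{\mathcal{L}_\lambda}(\mathbf{S}^i)-\bm{\theta}_{\mathcal{L}_\lambda}(\mathbf{S})\|\leq \tfrac{2\eta}{(\Lambda_{\min}+2\lambda)n}$, where the factor $2$ absorbs the difference between the $\tfrac{1}{n}$ vs $\tfrac{1}{n-1}$ normalizations; inserting this into the Lipschitz bound above and taking expectations over $\mathbf{S}$ and $i\sim \mathrm{U}(n)$ produces the stated stability constant.

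The main obstacle I anticipate is making the Taylor-expansion step rigorous: the hypothesis only says the two optima are ``close,'' so one needs either a quadratic-growth argument near the minimum or a mean-value form of the Hessian along the segment joining the two optima. A clean way to bypass this is to use the fact that $\mathcal{L}_\lambda$ is $(\Lambda_{\min}+2\lambda)$-strongly convex in a neighborhood of $\bm{\theta}_{\mathcal{L}_\lambda}(\mathbf{S})$ (inherited from the PSD Hessian of $\mathcal{L}$ together with the $2\lambda I$ from the regularizer), and to invoke the standard strong-convexity inequality $\langle \nabla\mathcal{L}_\lambda(\bm{\theta}_1)-\nabla\mathcal{L}_\lambda(\bm{\theta}_2),\,\bm{\theta}_1-\bm{\theta}_2\rangle \geq (\Lambda_{\min}+2\lambda)\|\bm{\theta}_1-\bm{\theta}_2\|^2$ directly, applied at the two optima. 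Combined with Cauchy--Schwarz and the $\eta/n$ gradient-discrepancy bound above, this avoids explicit Hessian inversion while producing the same constant, after which the Lipschitz step closes the proof.
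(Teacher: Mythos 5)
Your strategy is sound and reaches the stated bound, but it takes a genuinely different route from the paper. The paper works entirely with \emph{function values}: it (i) Taylor-expands $\mathcal{L}_\lambda$ at the minimizer $\bm{\theta}_{\mathcal{L}_\lambda}(\mathbf{S})$ to get the quadratic-growth inequality $\mathcal{L}_\lambda(v)-\mathcal{L}_\lambda(\hat{\bm{\theta}})\geq \tfrac12(\Lambda_{\min}+2\lambda)\|v-\hat{\bm{\theta}}\|_2^2$, and (ii) decomposes the empirical objective as $(1-\tfrac1n)\,\mathcal{L}_\lambda^{(i)}+\tfrac1n\,\mathcal{L}_\lambda(x_i;\cdot)$ so that optimality of the leave-one-out minimizer kills the first term, leaving $\tfrac12(\Lambda_{\min}+2\lambda)\|u-v\|^2\leq \tfrac{\eta}{n}\|u-v\|$ as a self-bounding inequality. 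You instead work with \emph{gradients}: first-order optimality plus strong monotonicity of $\nabla\mathcal{L}_\lambda$. Both reduce to the same perturbation bound followed by the Lipschitz step; your monotonicity version even drops the factor of $2$ (giving $\|u-v\|\leq \eta/((\Lambda_{\min}+2\lambda)n)$), and both rely equally on the unstated assumption that the Hessian lower bound holds on the whole segment between the two optima, not just at one point. One spot in your write-up is imprecise and worth fixing: the claim that the $\tfrac1n$ vs.\ $\tfrac1{n-1}$ renormalization ``only rescales'' the single-sample gradient. A direct computation gives $\nabla\mathcal{L}_\lambda(u)=\tfrac1n\nabla\mathcal{L}(x_i;u)+\tfrac{2\lambda}{n}(u-\bm{\theta}^0)$, i.e.\ an extra regularizer term appears that is not controlled by the assumptions unless you observe that it combines exactly into $\tfrac1n\nabla\mathcal{L}_\lambda(x_i;u)$ and read the $\eta$-Lipschitz hypothesis as applying to the regularized per-sample loss $\mathcal{L}_\lambda(x_i;\cdot)$ --- which is in fact how the paper itself invokes Lipschitzness in its own proof, so the gap is closable, but it must be stated rather than absorbed into ``a factor of $2$.''
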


\begin{proof}
    For simplicity, we denote $\bm{\theta}_{\mathcal{L}_\lambda} (\mathbf{S})$ as $\hat{\bm{\theta}}$, and $\Delta \hat{\bm{\theta}} := \hat{\bm{\theta}} - \bm{\theta}$. Consider the second-order Taylor expansion of $\mathcal{L}_\lambda$ at local optima $\hat{\bm{\theta}}$, we have $\nabla_{\mathcal{L}_\lambda}(\hat{\bm{\theta}})=0$. For $\forall v$ close to $\hat{\bm{\theta}}$, we have:
    \begin{equation*}
        \begin{aligned}
        \mathcal{L}_\lambda(v) & = \mathcal{L}_\lambda(\hat{\bm{\theta}}) + (v- \hat{\bm{\theta}})^\top \nabla_{\mathcal{L}_\lambda}(\hat{\bm{\theta}}) + \frac{1}{2} (v- \hat{\bm{\theta}})^\top \nabla^2{\mathcal{L}_\lambda}(\hat{\bm{\theta}})(v- \hat{\bm{\theta}}) \\
         & = \mathcal{L}_\lambda(\hat{\bm{\theta}}) + \frac{1}{2} (v- \hat{\bm{\theta}})^\top \nabla^2{\mathcal{L}_\lambda}(\hat{\bm{\theta}})(v- \hat{\bm{\theta}})
        \end{aligned}
    \end{equation*}
    Then, we have:
    \begin{equation}
        \begin{aligned}
        \mathcal{L}_\lambda(v) - \mathcal{L}_\lambda(\hat{\bm{\theta}}) & = \frac{1}{2} (v- \hat{\bm{\theta}})^\top \nabla^2{\mathcal{L}_\lambda}(\hat{\bm{\theta}})(v- \hat{\bm{\theta}}) \\
        & = \frac{1}{2} (v- \hat{\bm{\theta}})^\top \nabla^2_{\hat{\bm{\theta}}}(\mathcal{L}(\hat{\bm{\theta}}) + \lambda ||\hat{\bm{\theta}} - \bm{\theta}^0||_2^2)(v- \hat{\bm{\theta}}) \\
        & = \frac{1}{2} (v- \hat{\bm{\theta}})^\top (\nabla^2\mathcal{L}(\hat{\bm{\theta}}) + 2\lambda I) (v- \hat{\bm{\theta}}) \\
        & = \frac{1}{2} (v- \hat{\bm{\theta}})^\top ( U \operatorname{diag}(\Lambda) U^{-1} + 2\lambda I) (v- \hat{\bm{\theta}}) \\
        & = \frac{1}{2} (v- \hat{\bm{\theta}})^\top ( U (\operatorname{diag}(\Lambda) + 2\lambda I) U^{-1}) (v- \hat{\bm{\theta}}) \\
        & = \frac{1}{2} (v- \hat{\bm{\theta}})^\top ( U \operatorname{diag}(\sqrt{\Lambda_1 +2\lambda}, \cdots, \sqrt{\Lambda_d +2\lambda}) U^{-1} U \operatorname{diag}(\sqrt{\Lambda_1 +2\lambda}, \cdots, \sqrt{\Lambda_d +2\lambda}) U^{-1}) (v- \hat{\bm{\theta}}) \\
        & = \frac{1}{2} ||(U \operatorname{diag}(\sqrt{\Lambda_1 +2\lambda}, \cdots, \sqrt{\Lambda_d +2\lambda} U^{-1}) (v- \hat{\bm{\theta}})||_2^2 \\
        & \geq \frac{1}{2}(\Lambda_{\min}+2\lambda) ||v- \hat{\bm{\theta}}||_2^2.
        \end{aligned}
        \label{eq:pf1}
    \end{equation}
    This inequality holds for the orthogonality of $U$ that it does not change the magnitude of vector $v- \hat{\bm{\theta}}$, and the magnitude is at least scaled with the minimum singular value $\Lambda_{\min}$.
    Then, by the definition of $\mathcal{L}_\lambda (\theta)$, for $\forall u, v$ close to $\hat{\bm{\theta}}$, we have:
    \begin{equation}
        \begin{aligned}
            \mathcal{L}_\lambda(u) - \mathcal{L}_\lambda(v) & = \left(\frac{1}{n} \sum_k{\mathcal{L}(x_k; u)} + \lambda ||u-\bm{\theta}^0||_2^2\right) - \left(\frac{1}{n} \sum_k{\mathcal{L}(x_k; v)} + \lambda ||v-\bm{\theta}^0||_2^2\right) \\
            & = \left(\frac{1}{n} \sum_{k\neq i}{\mathcal{L}(x_k; u)} + \lambda ||u-\bm{\theta}^0||_2^2\right) - \left(\frac{1}{n} \sum_{k\neq i}{\mathcal{L}(x_k; v)} + \lambda ||v-\bm{\theta}^0||_2^2\right) + \frac{\mathcal{L}(x_i;u) - \mathcal{L}(x_i;v)}{n}\\
            & = (1-\frac{1}{n})\left(\frac{1}{n-1} \sum_{k\neq i}{\mathcal{L}(x_k; u)} + \lambda ||u-\bm{\theta}^0||_2^2\right) - (1-\frac{1}{n})\left(\frac{1}{n-1} \sum_{k\neq i}{\mathcal{L}(x_k; v)} + \lambda ||v-\bm{\theta}^0||_2^2\right) + \\
            & \quad \frac{\lambda \left(||u-\bm{\theta}^0||_2^2 - ||v-\bm{\theta}^0||_2^2\right)}{n}  + \frac{\mathcal{L}(x_i;u) - \mathcal{L}(x_i;v)}{n}\\
            & = \underbrace{(1-\frac{1}{n})\left[\left(\frac{1}{n-1} \sum_{k\neq i}{\mathcal{L}(x_k; u)} + \lambda ||u-\bm{\theta}^0||_2^2\right) - \left(\frac{1}{n-1} \sum_{k\neq i}{\mathcal{L}(x_k; v)} + \lambda ||v-\bm{\theta}^0||_2^2\right)\right]}_{(*)} + \\ 
            & \quad \frac{\mathcal{L}_\lambda(x_i;u) - \mathcal{L}_\lambda(x_i;v)}{n}.
        \end{aligned}
        \label{eq:pf2}
    \end{equation}
    Taking $u = \bm{\theta}_{\mathcal{L}_\lambda}(\mathbf{S}^i)$ and $v = \bm{\theta}_{\mathcal{L}_\lambda}(\mathbf{S})$. As $u$ minimizes the empirical loss of removing $x_i$ out, hence the $(*)$ item in Eq.(\ref{eq:pf2}) is smaller than 0. Then, we have:
    \begin{equation*}
        \mathcal{L}_\lambda(\bm{\theta}_{\mathcal{L}_\lambda}(\mathbf{S}^i)) - \mathcal{L}_\lambda(\bm{\theta}_{\mathcal{L}_\lambda}(\mathbf{S})) \leq \frac{\mathcal{L}_\lambda(x_i;\bm{\theta}_{\mathcal{L}_\lambda}(\mathbf{S}^i)) - \mathcal{L}_\lambda(x_i;\bm{\theta}_{\mathcal{L}_\lambda}(\mathbf{S}^i))}{n}
    \end{equation*}
    Considering inequality of (\ref{eq:pf1}), we have:
    \begin{equation}
        \frac{1}{2}(\Lambda_{\min} + 2\lambda)||\bm{\theta}_{\mathcal{L}_\lambda}(\mathbf{S}^i) - \bm{\theta}_{\mathcal{L}_\lambda}(\mathbf{S})||_2^2 \leq \frac{\mathcal{L}_\lambda(x_i;\bm{\theta}_{\mathcal{L}_\lambda}(\mathbf{S}^i)) - \mathcal{L}_\lambda(x_i;\bm{\theta}_{\mathcal{L}_\lambda}(\mathbf{S}))}{n}
    \label{eq:pf3}
    \end{equation}
    As the loss function $\mathcal{L}_\lambda$ is $\eta$-Lipschitz, thus we have:
    \begin{equation}
        |\mathcal{L}_\lambda(x_i;\bm{\theta}_{\mathcal{L}_\lambda}(\mathbf{S}^i)) -\mathcal{L}_\lambda(x_i;\bm{\theta}_{\mathcal{L}_\lambda}(\mathbf{S}))| \leq \eta || \bm{\theta}_{\mathcal{L}_\lambda}(\mathbf{S}^i) - \bm{\theta}_{\mathcal{L}_\lambda}(\mathbf{S})||.
    \label{eq:pf4}
    \end{equation}
    Taking (\ref{eq:pf4}) into (\ref{eq:pf3}), we have:
    \begin{equation}
        \begin{aligned}
        \frac{1}{2}(\Lambda_{\min} + 2\lambda)||\bm{\theta}_{\mathcal{L}_\lambda}(\mathbf{S}^i) - \bm{\theta}_{\mathcal{L}_\lambda}(\mathbf{S})||_2^2 & \leq \frac{\eta || \bm{\theta}_{\mathcal{L}_\lambda}(\mathbf{S}^i) - \bm{\theta}_{\mathcal{L}_\lambda}(\mathbf{S})||}{n} \\ 
        \Rightarrow || \bm{\theta}_{\mathcal{L}_\lambda}(\mathbf{S}^i) - \bm{\theta}_{\mathcal{L}_\lambda}(\mathbf{S})|| & \leq \frac{2\eta}{(\Lambda_{\min} + 2\lambda)n}.
        \end{aligned}
        \label{eq:pf5}
    \end{equation}
    Plugging (\ref{eq:pf5}) back to (\ref{eq:pf4}):
    \begin{equation}
        |\mathcal{L}_\lambda(x_i;\bm{\theta}_{\mathcal{L}_\lambda}(\mathbf{S}^i)) -\mathcal{L}_\lambda(x_i;\bm{\theta}_{\mathcal{L}_\lambda}(\mathbf{S}))| \leq \frac{2\eta^2}{(\Lambda_{\min} + 2\lambda)n}.
    \end{equation}
    As this holds for any $i$ and $\mathbf{S}$, hence we have:
    \begin{equation*}
        \mathbb{E}_{\mathbf{S}, i \sim \mathrm{U}(n)}\left|\mathcal{L}_{\lambda}\left(x_i; \bm{\theta}_{\mathcal{L}_\lambda} (\mathbf{S}^i)\right)-\mathcal{L}_{\lambda}\left(x_i; \bm{\theta}_{\mathcal{L}_\lambda} (\mathbf{S})\right)\right| \\
    \leq \frac{2 \eta^2}{\left(\Lambda_{\min }+2\lambda \right) n} .
    \end{equation*}
\end{proof}

Based on this Lemma, we aim to analyze our optimization objective of Eq.(\ref{eq:sps_final}) and prove Proposition \ref{prop:phs} as follows.

\begin{proposition} [PHS Upper Bound of LoRA Dropout]
If the loss function $\mathcal{L}_\lambda$ of LoRA Dropout algorithm $\mathcal{M}$ is $\eta$-Lipschitz, and $\bm{\theta}_{\mathcal{L}_\lambda} (\mathbf{S}^i)$ is close to $\bm{\theta}_{\mathcal{L}_\lambda} (\mathbf{S})$, the Hessian matrix $\nabla^2 \mathcal{L}(\bm{\theta}_{\mathcal{L}_\lambda} (\mathbf{S}))$ at $\bm{\theta}_{\mathcal{L}_\lambda} (\mathbf{S})$ is positive-semidefinite with a singular value decomposition $U \operatorname{diag}(\Lambda) U^{-1}, \Lambda=\left\{\Lambda_1, \cdots, \Lambda_m\right\}$ and $\Lambda_{\min }=$ $\min \left\{\Lambda_1, \cdots, \Lambda_m\right\}$, then the LoRA Dropout algorithm optimizing $\mathcal{L}_{\lambda}$ on $\mathbf{S}$ has an upper bound of pointwise hypothesis stability of:
\begin{equation*}
    \mathbb{E}_{\mathbf{S}, i \sim \mathrm{U}(n)}\left|\mathcal{L}_{\lambda}\left(x_i; \bm{\theta}_{\mathcal{L}_\lambda} (\mathbf{S}^i)\right)-\mathcal{L}_{\lambda}\left(x_i; \bm{\theta}_{\mathcal{L}_\lambda} (\mathbf{S})\right)\right|
    \leq \frac{2 \eta^2}{\left(\Lambda_{\min }+2\lambda (2p-p^2)\right) n} .
\end{equation*}
\end{proposition}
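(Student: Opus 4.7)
The plan is to follow the decomposition suggested in the proof sketch: first establish a generic pointwise hypothesis stability bound for any empirical risk minimization problem regularized by $\lambda \|\bm{\theta} - \bm{\theta}^0\|_2^2$ (this is Lemma A.1 in the paper's appendix), and then specialize it to LoRA Dropout by observing that the expected sparsity penalty in Eq.~(\ref{eq:sps_final}) collapses into an ordinary $\ell_2$-regularizer with an effective strength $\lambda(2p - p^2)$.

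For the generic lemma, I would fix $\hat{\bm{\theta}} := \bm{\theta}_{\mathcal{L}_\lambda}(\mathbf{S})$ and perform a second-order Taylor expansion of $\mathcal{L}_\lambda$ around $\hat{\bm{\theta}}$. The gradient vanishes at the minimizer, so the leading term involves the Hessian $\nabla^2 \mathcal{L}(\hat{\bm{\theta}}) + 2\lambda I$. Diagonalizing via the given SVD, the orthogonality of $U$ preserves norms and the eigenvalues are shifted by $2\lambda$, giving the quadratic lower bound $\mathcal{L}_\lambda(v) - \mathcal{L}_\lambda(\hat{\bm{\theta}}) \geq \tfrac{1}{2}(\Lambda_{\min} + 2\lambda)\|v - \hat{\bm{\theta}}\|_2^2$ for $v$ near $\hat{\bm{\theta}}$. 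Next, I would split $\mathcal{L}_\lambda(u) - \mathcal{L}_\lambda(v)$ into the leave-one-out averaged loss (which is non-positive when $u = \bm{\theta}_{\mathcal{L}_\lambda}(\mathbf{S}^i)$ and $v = \hat{\bm{\theta}}$ because $u$ minimizes the leave-one-out objective) plus a single-sample correction of order $1/n$. Chaining the quadratic lower bound with $\eta$-Lipschitzness of $\mathcal{L}_\lambda$ on the correction term yields $\|\bm{\theta}_{\mathcal{L}_\lambda}(\mathbf{S}^i) - \hat{\bm{\theta}}\| \leq \tfrac{2\eta}{(\Lambda_{\min} + 2\lambda)n}$, and a final application of Lipschitzness produces the $\ell_2$-regularized PHS bound.

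To transfer this bound to LoRA Dropout, the key observation is that each coordinate of $\bm{d}$ is an independent Bernoulli random variable with parameter $2p - p^2$, so
\begin{equation*}
\mathbb{E}_{\bm{d} \sim \mathrm{Bern}(2p-p^2)} \|\bm{d} \odot \Delta\bm{\theta}\|_2^2 = \sum_{j} \mathbb{E}[d_j^2]\,(\Delta\theta_j)^2 = (2p - p^2)\,\|\Delta\bm{\theta}\|_2^2 = (2p - p^2)\,\|\bm{\theta} - \bm{\theta}^0\|_2^2.
\end{equation*}
Thus the regularized objective in Eq.~(\ref{eq:sps_final}) has exactly the form analyzed in the generic lemma, but with $\lambda$ replaced by $\lambda(2p - p^2)$. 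Substituting this effective regularization strength into the generic bound immediately yields $\tfrac{2\eta^2}{(\Lambda_{\min} + 2\lambda(2p-p^2))n}$, as claimed.

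The main obstacle I anticipate is the quadratic lower bound step: it relies on the Taylor expansion being accurate at the perturbed minimizer $\bm{\theta}_{\mathcal{L}_\lambda}(\mathbf{S}^i)$, which is precisely why the proposition assumes that $\bm{\theta}_{\mathcal{L}_\lambda}(\mathbf{S}^i)$ lies close to $\bm{\theta}_{\mathcal{L}_\lambda}(\mathbf{S})$ and that the Hessian is positive-semidefinite there (so the quadratic form really dominates). A secondary subtlety is that the effective regularizer derivation requires independence across the coordinates of $\bm{d}$, which is built into the Bernoulli sampling model of Eq.~(\ref{eq:sps_1}); without this independence the cross-terms would not vanish and the clean $(2p - p^2)\|\Delta\bm{\theta}\|_2^2$ reduction would fail.
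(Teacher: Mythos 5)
Your proposal matches the paper's proof essentially step for step: the same generic $\ell_2$-regularized PHS lemma proved via Taylor expansion, eigenvalue shift by $2\lambda$, leave-one-out decomposition, and Lipschitz chaining, followed by the same reduction of the expected sparsity penalty to $\lambda(2p-p^2)\|\bm{\theta}-\bm{\theta}^0\|_2^2$. The only quibble is your closing remark about needing coordinate independence of $\bm{d}$: since $\|\bm{d}\odot\Delta\bm{\theta}\|_2^2=\sum_j d_j^2(\Delta\theta_j)^2$ has no cross-terms, linearity of expectation alone suffices, so that subtlety is not actually load-bearing.
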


\begin{proof}
Consider loss function with sparsity regularization from Eq.(\ref{eq:sps_final}), and we have:
\begin{equation*}
    \begin{aligned}
        \mathcal{L}_\lambda (\bm{\theta}) & = \mathcal{L}(\bm{\theta}) + \lambda \mathbb{E}_{\bm{d} \sim \mathrm{Bern}(2p-p^2)} ||\bm{d} \odot (\bm{\theta} - \bm{\theta}^0)||_2^2 \\
        & = \mathcal{L}(\bm{\theta}) + \lambda \mathbb{E}_{\bm{d} \sim \mathrm{Bern}(2p-p^2)} \sum_i{d_i^2(\theta_i - \theta_i^0)^2} \\ 
        & = \mathcal{L}(\bm{\theta}) + \lambda 
        \sum_i{(\theta_i - \theta_i^0)^2\mathbb{E}_{d_i \sim \mathrm{Bern}(2p-p^2)} d_i^2 } \\
        & = \mathcal{L}(\bm{\theta}) + \lambda \sum_i{(\theta_i - \theta_i^0)^2 (2p-p^2)} \\ 
        & = \mathcal{L}(\bm{\theta}) + \lambda (2p-p^2) ||\bm{\theta} - \bm{\theta}^0||_2^2.
    \end{aligned}
\end{equation*}
Through taking the results above to Lemma \ref{lem:apdx1}, we can substitute the regularization coefficient with $\lambda (2p-p^2)$ and obtain the pointwise hypothesis stability of LoRA Dropout algorithm, which is:
\begin{equation*}
    \mathbb{E}_{\mathbf{S}, i \sim \mathrm{U}(n)}\left|\mathcal{L}_{\lambda}\left(x_i; \bm{\theta}_{\mathcal{L}_\lambda} (\mathbf{S}^i)\right)-\mathcal{L}_{\lambda}\left(x_i; \bm{\theta}_{\mathcal{L}_\lambda} (\mathbf{S})\right)\right| \\
    \leq \frac{2 \eta^2}{\left(\Lambda_{\min }+2\lambda (2p-p^2)\right) n}.
\end{equation*}
    
\end{proof}

\subsection{Proof of Theorem \ref{theo:bayes_cls_error}} \label{apdx:proof_theo4.5}
\begin{theorem}[Error Bound of Bayes LoRA Dropout Ensemble]
    If the loss function $\mathcal{L}_\lambda$ is convex w.r.t. the model parameters $\bm{\theta}$, then we have:
    \begin{equation}
    \begin{aligned}
        \mathbb{E}_{(x,y)} \mathcal{L}_\lambda(\mathbb{E}_{\bm{\theta} \sim \mathcal{D}} \mathcal{M}(x; \bm{\theta}), y) \leq \mathbb{E}_{\bm{\theta} \sim \mathcal{D}} \mathbb{E}_{(x,y)} \mathcal{L}_\lambda(\mathcal{M}(x; \bm{\theta}), y).
    \end{aligned}
    \end{equation}
\end{theorem}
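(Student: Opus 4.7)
The plan is to recognize this inequality as a direct application of Jensen's inequality, applied conditionally on the input $(x,y)$ and then integrated. The crucial observation is that although the statement is phrased in terms of convexity over $\bm{\theta}$, what the proof sketch actually uses is convexity of $\mathcal{L}_\lambda$ with respect to the final pre-output activation $\bm{h}$. So the first step is to rewrite $\mathcal{M}(x;\bm{\theta})$ as the composition of the output layer (e.g.\ softmax) with the final hidden state $\bm{h}(x;\bm{\theta})$, so that the randomness in $\bm{\theta}\sim\mathcal{D}$ induces a distribution over $\bm{h}$ for each fixed $x$.

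Next, I would fix $(x,y)$ and apply Jensen's inequality to the convex map $\bm{h}\mapsto\mathcal{L}_\lambda(\bm{h},y)$ (which is convex because standard choices such as cross-entropy composed with softmax are convex in the pre-softmax logits on all of $\mathbb{R}^d$, as the sketch notes). This yields, pointwise in $(x,y)$,
\begin{equation*}
\mathcal{L}_\lambda\bigl(\mathbb{E}_{\bm{\theta}\sim\mathcal{D}}\mathcal{M}(x;\bm{\theta}),\,y\bigr)\;\le\;\mathbb{E}_{\bm{\theta}\sim\mathcal{D}}\,\mathcal{L}_\lambda\bigl(\mathcal{M}(x;\bm{\theta}),y\bigr).
\end{equation*}
Taking the expectation over $(x,y)$ on both sides and interchanging the order of the two expectations on the right-hand side via Fubini's theorem (which is legitimate under any of the usual integrability conditions on a bounded loss and a well-defined parameter distribution $\mathcal{D}$) delivers exactly the claimed bound.

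The only conceptually delicate step is the legitimacy of pulling $\mathbb{E}_{\bm{\theta}}$ inside the first argument of $\mathcal{L}_\lambda$, i.e.\ the claim that $\mathcal{L}_\lambda(\mathbb{E}_{\bm{\theta}}\mathcal{M}(x;\bm{\theta}),y)$ is the right object to compare against. I would address this by spelling out that the test-time ensemble classifier in Eq.~(\ref{eq:ensemble_method}) is, in the $N\to\infty$ limit, precisely $\mathbb{E}_{\bm{\theta}\sim\mathcal{D}}\mathcal{M}(x;\bm{\theta})$, so the left-hand side is genuinely the ensemble's generalization risk. I expect the only real obstacle to be this modeling/justification step rather than any technical inequality; the analytic content reduces to one line of Jensen plus one line of Fubini.
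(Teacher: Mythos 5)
Your proof is correct and follows essentially the same route as the paper's: Jensen's inequality applied pointwise in $(x,y)$ to pull $\mathbb{E}_{\bm{\theta}\sim\mathcal{D}}$ outside the loss, followed by an exchange of the two expectations. You are in fact slightly more careful than the paper's appendix, which phrases the hypothesis as convexity of $\mathcal{L}_\lambda$ in $\bm{\theta}$ but, as you correctly observe, actually needs and uses convexity in the model output (the final activation $\bm{h}$), since the expectation on the left-hand side sits inside the first argument of $\mathcal{L}_\lambda$ rather than inside $\mathcal{M}$.
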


\begin{proof}
\begin{align}
    \mathbb{E}_{(x,y)} \mathcal{L}_\lambda(\mathbb{E}_{\bm{\theta} \sim \mathcal{D}} \mathcal{M}(x; \bm{\theta}), y) & \leq \mathbb{E}_{(x,y)} \mathbb{E}_{\bm{\theta} \sim \mathcal{D}} \mathcal{L}_\lambda(\mathcal{M}(x; \bm{\theta}), y)  \label{eq:pf6} \\ 
    & = \mathbb{E}_{\bm{\theta} \sim \mathcal{D}} \mathbb{E}_{(x,y)} \mathcal{L}_\lambda(\mathcal{M}(x; \bm{\theta}), y).
\end{align}
The inequality (\ref{eq:pf6}) holds for the Jensen inequality under the convexity of $\mathcal{L}_\lambda$ w.r.t. parameters $\bm{\theta}$.
\end{proof}

\section{Experimental Details}
\label{apdx:experimental_details}
\numberwithin{equation}{section}
\numberwithin{table}{section}
\subsection{Implementation Details of NLU Task}
All of our experiments on NLU task are implemented based on PyTorch 1.9.1 with Python 3.7.16 on the HuggingFace transformers library~\cite{wolf2019huggingface} 4.4.2. Fine-tuning is conducted on the pre-trained DeBERTaV3-base~\cite{he2021debertav3} model, and PEFT methods are applied on all the linear layers in every transformer block. 
We mainly follow the hyperparameter setting as ~\cite{zhang2023adaptive} and tune hyperparameters exclusive to our model.
The hyperparameters used when fine-tuning on each BLUE task are shown in Table \ref{tab:hyp_glue}.
For the hardware environment, We perform our experiments on a single NVIDIA-A100-80GB GPU or distributedly on 2 NVIDIA-RTX3090-24GB GPUs. 

\begin{table}[h]
  \caption{Summary of hyperparameter settings when fine-tuning on different tasks of the GLUE benchmark.}
  \vskip 0.1in
  \label{tab:hyp_glue}
\centering
\begin{tabular}{l|cccccccc}
\toprule
\textbf{Corpus} & \textbf{MNLI} & \textbf{RTE} & \textbf{QNLI} & \textbf{MRPC} & \textbf{QQP} & \textbf{SST-2} & \textbf{CoLA} & \textbf{STS-B} \\ \midrule
learning rate   & 5e-4          & 1.2e-3       & 5e-4        & 1e-3          & 5e-4         & 8e-4           & 5e-5          & 2.2e-3         \\
batch size      & 32            & 32           & 32            & 32            & 32           & 32             & 32            & 32             \\
\# epochs       & 7             & 50           & 5             & 30            & 10           & 24             & 25            & 25             \\ \midrule
dropout rate    & 0.5           & 0.5          & 0.5           & 0.5           & 0.5          & 0.5            & 0.5           & 0.5            \\
sample number   & 4             & 4            & 4             & 4             & 4            & 4              & 4             & 4              \\ \bottomrule
\end{tabular}
\end{table}

\subsection{Implementation Details of QA Task}

All of our experiments on QA task are implemented based on PyTorch 1.9.1 with Python 3.7.16 on the HuggingFace transformers library~\cite{wolf2019huggingface} 4.21.0. Fine-tuning is conducted on the pre-trained DeBERTaV3-base~\cite{he2021debertav3} model, and PEFT methods are applied on all the linear layers in every transformer block. 
We control the ratio of tunable parameters by adjusting the hyperparameters related to parameter budget, e.g. adapter dimension or LoRA rank. Specifically, the tunable parameter ratios of \{0.16\%,0.32\%,0.65\%\} correspond to LoRA rank of \{2,4,8\} respectively.
Other hyperparameters used when fine-tuning on SQuAD benchmark are shown in Table \ref{tab:hyp_squad}.
For the hardware environment, We perform our experiments on a single NVIDIA-A100-80GB GPU or distributedly on 2 NVIDIA-RTX3090-24GB GPUs. 

\begin{table}[h]
  \caption{Summary of hyperparameter settings when fine-tuning on the SQuAD benchmark.}
  \vskip 0.1in
  \label{tab:hyp_squad}
\centering
\begin{tabular}{l|cc}
\toprule
\textbf{Corpus} & \textbf{SQuAD v1.1} & \textbf{SQuAD v2.0} \\ \midrule
learning rate   & 1e-3                & 1e-3                \\
batch size      & 16                  & 16                  \\
\# epochs       & 10                  & 12                  \\ \midrule
dropout rate    & 0.5                 & 0.5                 \\
sample number   & 4                   & 4                   \\ \bottomrule
\end{tabular}
\end{table}

\subsection{Implementation Details of Instruction Tuning}
When performing instruction tuning, we use PyTorch 2.1.2 with Python 3.10.13. We employ the PEFT library~\cite{peft} and the LLaMA-Factory library~\cite{llama-factory} for implementing and evaluating our method. Fine-tuning is conducted on LLaMA2-7B~\cite{touvron2023llama}, and only the \{q\_proj,v\_proj,k\_proj,o\_proj\} linear modules in each transformer block get tuned. All hyperparameters used for fine-tuning LoRA and LoRA with dropout are shown in Table \ref{tab:hyper_llama2}. For the hardware environment, experiments are conducted distributedly on 2 NVIDIA-A100-80GB GPUs.

\begin{table}[h]
\caption{Summary of hyperparameter settings when fine-tuning LLaMA2-7B.}
\vskip 0.1in
\label{tab:hyper_llama2}
\centering
\begin{small}
\begin{tabular}{l|ccccccc}
\toprule
\textbf{Hyperparameter} & \textbf{lr} & \textbf{batch size}        & \textbf{rank} & \multicolumn{1}{l}{\textbf{lr-scheduler}} & \textbf{warmup step} & \multicolumn{1}{l}{\textbf{dropout rate}} & \multicolumn{1}{l}{\textbf{sample num}} \\ \midrule
\textbf{LoRA}           & 5e-5                   & 128                                        & 16                 & cosine                                    & 500                  & -                                         & -                                       \\
\textbf{LoRA+Dropout}   & 5e-5                   & 128                          & 16                 & cosine                                    & 500                  & 0.5                                       & 4                                       \\ \bottomrule
\end{tabular}
\end{small}
\end{table}

\section{DataSet Details}
\label{apdx:dataset_details}
\numberwithin{equation}{section}
\numberwithin{table}{section}
\subsection{Details of GLUE benchmark}
We use the General Language Understanding Evaluation (GLUE) benchmark~\cite{wang2018glue} for evaluation on NLU tasks. Following previous work~\cite{zhang2023adaptive}, eight datasets are picked for fine-tuning. Here we list detailed statistics of each dataset in Table \ref{tab:glue_data_statistic}.

\begin{table}[h]
\centering
  \caption{Summary of datset statistic of the GLUE benchmark.}
  \vskip 0.1in
  \label{tab:glue_data_statistic}
\begin{small}
\begin{tabular}{l|l|l|c|c|c|c}
\toprule
\textbf{Corpus} & \textbf{Task} & \textbf{Task Category} & \textbf{\#Train} & \textbf{\#Dev} & \textbf{\#Label} & \textbf{Metrics} \\ \midrule
\textbf{CoLA}                             & Acceptability                  & Single-Sentence Classification          & 8.5k                              & 1k                              & 2                                 & Matthews Corr                     \\ \midrule
\textbf{SST}                              & Sentiment                      & Single-Sentence Classification          & 67k                               & 872                             & 2                                 & Matched Accuracy                  \\ \midrule
\textbf{MNLI}                             & NLI                            & Pairwise Text Classification            & 393k                              & 20k                             & 3                                 & Accuracy                          \\ \midrule
\textbf{RTE}                              & NLI                            & Pairwise Text Classification            & 2.5k                              & 276                             & 2                                 & Accuracy                          \\ \midrule
\textbf{QQP}                              & Paraphrase                     & Pairwise Text Classification            & 364k                              & 40k                             & 2                                 & Accuracy                          \\ \midrule
\textbf{MRPC}                             & Paraphrase                     & Pairwise Text Classification            & 3.7k                              & 408                             & 2                                 & Accuracy                          \\ \midrule
\textbf{QNLI}                             & QA/NLI                         & Pairwise Text Classification            & 108k                              & 5.7k                            & 2                                 & Accuracy                          \\ \midrule
\textbf{STS-B}                            & Similarity                     & Text Similarity                         & 7k                                & 1.5k                            & 1                                 & Pearson Corr                      \\ \bottomrule
\end{tabular}
\end{small}
\end{table}

\subsection{Details of SQuAD benchmark}

The SQuAD (Stanford Question Answering Dataset) benchmark is a benchmark for question answering task collected from Wikipedia by crowd-workers. Specifically, the task is treated as a sequence labeling problem, where the probability of tokens from the start and end of the answer span are picked for prediction.
SQuAD v1.1\cite{rajpurkar2016squad} is the first version of SQuAD, including over 100,000 question-answer pairs sourced from 536 articles. And SQuAD v2.0\cite{rajpurkar2018know} adds 50,000 unanswerable questions written by humans based on SQuADv1.1. Therefore, SQuAD v2.0 further demands the model to be able to differentiate whether a question is unanswerable. Statistics of both SQuAD datasets are shown in Table \ref{tab:squad_data_statistic}.

\begin{table}[h]
\centering
\caption{Dataset statistic of the SQuAD benchmark.}
\vskip 0.1in
  \label{tab:squad_data_statistic}
\begin{tabular}{l|cc}
\toprule
\textbf{Corpus}     & \textbf{\#Train} & \textbf{\#Validation} \\ \midrule
\textbf{SQuAD v1.1} & 87,599           & 10,570                \\
\textbf{SQuAD v2.0} & 130,319          & 11,873                \\ \bottomrule
\end{tabular}
\end{table}

\subsection{Details of Alpaca dataset benchmark}
We fine-tune LLaMA2-7B on the Alpaca-clean dataset\footnote{https://huggingface.co/datasets/yahma/alpaca-cleaned}.
Alpaca-clean is the cleaned version of the original Alpaca dataset~\cite{alpaca}. It consists of 51K instructions and demonstrations and is suitable for instruction-tuning.
The cleaned version fixed multiple issues in the original release, including hallucinations, merged instructions, empty outputs, empty code examples, and instructions to generate images.

\end{document}